%% file: main.tex
\documentclass{article}

\PassOptionsToPackage{numbers, compress}{natbib}

 \usepackage[main, final]{neurips_2026}

\usepackage[utf8]{inputenc} 
\usepackage[T1]{fontenc}    
\usepackage{hyperref}       
\usepackage{url}            
\usepackage{booktabs}       
\usepackage{amsfonts}       
\usepackage{nicefrac}       
\usepackage{microtype}      
\usepackage{xcolor}         
\usepackage{multirow}       
\usepackage{bbm}
\usepackage{amsmath}
\usepackage{amsthm}
\usepackage{graphicx}       
\newtheorem{theorem}{Theorem}
\newtheorem{lemma}{Lemma}
\usepackage{enumitem}

\usepackage{wrapfig}
\usepackage[most]{tcolorbox}
\newtcolorbox{promptbox}[1][]{
    enhanced,
    colback=gray!5,
    colframe=gray!55,
    boxrule=0.6pt,
    arc=1.5mm,
    left=3mm,
    right=3mm,
    top=3mm,
    bottom=3mm,
    fonttitle=\bfseries\normalsize\sffamily\color{gray!70!black},
    title=#1,
    breakable,
    fontupper=\ttfamily\normalsize,    
    before skip=10pt,
    after skip=10pt,
}

\usepackage{tabularx}
\usepackage{booktabs}
\newtcolorbox{casestudybox}[1][]{
    enhanced,
    breakable,
    colback=blue!2,
    colframe=blue!35!black,
    boxrule=0.45pt,
    arc=0.8mm,
    left=3mm,
    right=3mm,
    top=2.2mm,
    bottom=2.2mm,
    fonttitle=\bfseries\normalsize\sffamily\color{blue!35!black},
    title=#1,
    fontupper=\normalfont\normalsize,
    before skip=10pt,
    after skip=10pt,
}
\newcommand{\casefield}[1]{%
    \vspace{0.35em}
    \noindent\textbf{\sffamily #1.}
}
\newcommand{\modelchoice}[2]{%
    \vspace{0.45em}
    \noindent\textbf{\sffamily #1.} #2
}

\definecolor{mygreen}{RGB}{0, 128, 80}   
\definecolor{myred}{RGB}{200, 60, 60}    

\usepackage{subcaption}
\usepackage{algorithmic}
\usepackage{algorithm}
\usepackage[table]{xcolor}
\usepackage{booktabs}
\usepackage{multirow}
\usepackage{array}

\title{PAFO: Pareto Fairness Optimization for \\ Personalized Reward Modeling}

\author{%
  Xiaoyan Zhao$^{1}$\thanks{Equal contribution.}\hspace{3pt}, 
  Haoting Ni$^{2}$\footnotemark[1]\hspace{3pt}, 
  Yang Zhang$^{1}$, Chunyuan Zheng$^{3}$, Haoxuan Li$^{3}$, Fuli Feng$^{2}$ \\[4pt]
  $^1$National University of Singapore \\ $^2$University of Science and Technology of China \quad
  $^3$Peking University \\[2pt]
  \texttt{xiaoyanzhao.ai@gmail.com}
}

\begin{document}
\maketitle
\begin{abstract}
    Large language models (LLMs) increasingly rely on reward models to align their outputs with diverse user preferences. While personalized reward models aim to capture such heterogeneity, they are often trained on imbalanced user preference data and may therefore favor users whose preferences are more common in the training population. In this paper, we identify this failure mode as personalized reward bias, where reward modeling quality varies systematically with preference support rate. We formulate its mitigation as a Pareto fairness problem over group utilities, aiming to improve under-served users without degrading other user groups. To this end, we propose \textit{\textbf{PAFO}}, a Pareto fairness optimization framework for personalized reward modeling. PAFO first trains group-specialized reward models for majority and minority preference groups, then constructs conditional margin-level supervision to distill their heterogeneous preference boundaries into a single unified model. The resulting model uses group information only during training and requires no explicit group labels at inference time. Experiments on Personal-LLM and DSP show that PAFO improves both minority-group and majority-group accuracy while reducing user-level unfairness across multiple metrics, demonstrating its effectiveness for fairer LLM personalization.
    
\end{abstract}

\input{sections/1_intro}

\input{sections/2_problem_formulation}

\input{sections/3_method}

\input{sections/4_experimentv1}

\input{sections/5_related_workv1}

\input{sections/6_conclusionv1}

\bibliographystyle{unsrtnat}
\bibliography{refs}

\appendix

\input{sections/7_appendix}



\end{document}

%% file: sections/1_intro.tex
\section{Introduction}

\setlength{\columnsep}{4pt}
\begin{wrapfigure}{r}{0.33\linewidth}
    \vspace{-12pt}
    \includegraphics[width=0.9\linewidth]{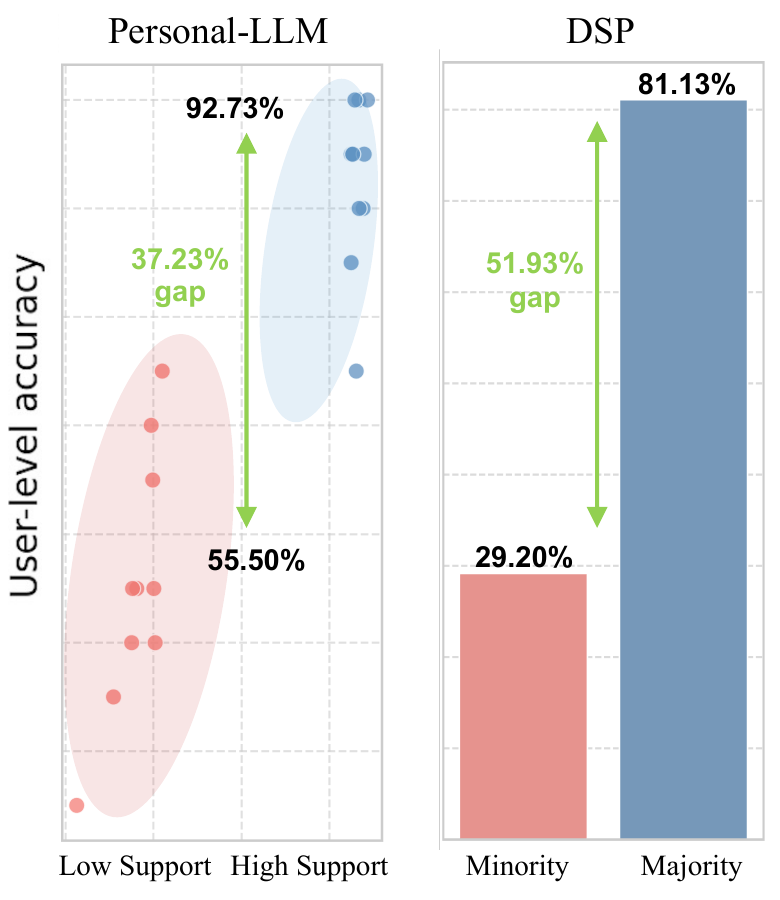}
    \caption{Minority users suffer substantially lower reward prediction accuracy than majority users, even under personalized RM.}
    \label{fig:unfairness}
    \vspace{-8pt}
\end{wrapfigure}
Reward models are central to the alignment of large language models, providing the scalar signals used to optimize policies in RLHF~\cite{ouyang2022training,stiennon2020learning,kim2024rethinking} and to select responses at test time~\cite{liu2025pairwise,hung2025reward}.
However, reward models are not neutral estimators of human judgment. Existing studies have shown that they can encode systematic biases~\cite{huang2024post,gallegos2024bias,singhal2023long,gallegos2024bias,hayes2024large}, such as length bias, sycophancy, and social bias. These findings suggest that the reward signal used for alignment may reflect artifacts of the training or annotation process rather than the preferences that users actually intend to express~\cite{lambert2025rewardbench,sharma2023towards}. 

Most existing analyses study reward model bias in the population-level setting~\cite{lambert2025rewardbench,xiao2024algorithmic}, where a single reward function is shared across all users~\cite{song2025towards} and bias is examined as a shared distortion or along predefined demographic axes. This perspective is insufficient for personalized alignment, where different users may naturally prefer different response styles, reasoning patterns, or interaction norms\citep{zhao2026nextquill,zhang2026reinforced,zhao2025steerx,qiu2025latent}. However, personalization addresses preference heterogeneity without addressing preference-support imbalance. When preference data are imbalanced, the training objective can still be dominated by users whose preferences are more common, leaving rare or less-supported preference patterns insufficiently represented~\citep{dong2025personalization}.

We refer to this failure mode as \textit{personalization unfairness} in reward modeling: a systematic disparity in preference accuracy across users, structured by how well a user's preference pattern is supported in the training distribution. Unlike conventional reward model biases, which manifest as a shared distortion across the population (e.g., length bias)~\citep{huang2024post} or along pre-specified demographic axes~\citep{song2025towards,PAFO}, personalization unfairness is \textit{support-structured}---its incidence on a given user is determined by the rarity of that user's preference pattern, a quantity that is neither annotated nor necessarily tied to explicit demographic attributes. Figure~\ref{fig:unfairness} illustrates this phenomenon empirically. On the Personal-LLM benchmark~\citep{zollo2024personalllm}, representative personalized reward modeling methods exhibit a strong dependence between per-user accuracy and per-user support rate; on the DSP benchmark~\citep{cheng2023everyone}, accuracy on minority preference styles is substantially lower than on majority styles. Therefore, personalization may inherit preference-support imbalance rather than eliminate it.

Mitigating personalization unfairness is challenging. Users with minority preferences provide few supervision signals, so their reward boundaries are hard to learn from data alone. Training a separate model per group would help, but separate models are unusable at inference in practice, where group membership is typically unknown. Single-model fixes such as reweighting or fairness regularization avoid this problem but tend to lift minority accuracy at the expense of majority accuracy, redistributing error rather than reducing it~\citep{ouyang2025towards}. We therefore formulate the objective as \textit{Pareto} improvement over preference groups: only no-regret gains for minorities count as solutions.
    
We propose PAFO (\textbf{PA}reto \textbf{F}airness \textbf{O}ptimization), a framework for personalized reward modeling. The core idea is to let a reward model first specialize on each preference group and then consolidate these group-specific abilities back into itself through self-distillation, so that group information is consumed only during training. Concretely, PAFO partitions data into minority-to-majority preference groups and finetunes a group-specialized reward model for each group, so that each minority's preference structure is captured from within-group signal rather than diluted across the full population. A conditional teacher then routes each preference pair to its group-specialized model and uses the resulting reward margin as the supervision target for self-distillation. The same model, now serving as the student, is trained to match these group-specific margins while taking only the user-conditioning signals available at inference as input. This design realizes Pareto improvement: minority groups thus benefit from dedicated specialization, and majority groups are equally served by their own specialist, so the student inherits an undiluted majority signal instead of one reweighted against minority groups. Group information, used only as a training-time scaffold, is no longer needed at deployment.

\textbf{Our main contributions can be summarized as follows:}
\begin{itemize}[leftmargin=*]
\item \textbf{Problem.} We identify and formalize \textit{personalization unfairness} in reward modeling, where per-user reward quality is governed by preference-support imbalance, distinct from prior reward model biases that affect the population uniformly or along demographic axes.
\item \textbf{Method.} We frame mitigation as Pareto improvement over preference groups and propose PAFO, which specializes a reward model on each group and consolidates these abilities back via conditional margin-level self-distillation, requiring group information only at training time.
\item \textbf{Results.} On Personal-LLM and DSP, PAFO improves minority-group accuracy, preserves majority-group accuracy, and reduces user-level unfairness, showing that Pareto improvement is achievable in personalized reward modeling.
\end{itemize}

%% file: sections/2_problem_formulation.tex
\section{Preliminary Analysis}

\paragraph{Problem Formulation.}
We study the personalized reward modeling task. Given a query $x$, user historical information $h$, and a candidate response $y$, a personalized reward model $r_{\theta}(x, h, y)$ assigns a scalar score to the response conditioned on both the query context and the user-specific information. 
The preference-labeled dataset is $\mathcal{D}=\{(s_i^A, s_i^B, \mathbbm{1}\{s_i^A \succ s_i^B\})\}_{i=1}^{N}$, where $s^A=[x, y^A]$ and $s^B=[x, y^B]$ denote two candidate responses under the same prompt $x$, and $\mathbbm{1}\{s_i^A \succ s_i^B\}$ denotes the user's preference. The standard training objective is the Bradley--Terry--Luce (BTL) loss \citep{bradley1952rank}
\begin{equation}
\mathcal{L}_{\mathrm{BTL}}(\theta) \;=\; -\,\mathbb{E}_{(s^A, s^B) \sim \mathcal{D}}\!\left[\log \sigma\!\left(r_\theta(s^A) - r_\theta(s^B)\right)\right],
\label{eq:btl}
\end{equation}
where we abbreviate $r_\theta(s) = r_\theta(x, h, y)$ for compactness and $\sigma(\cdot)$ is the sigmoid. Unlike the conventional setting that assumes a single latent reward function shared across all users, our central question is whether a personalized reward model can leverage user-specific information to improve modeling fidelity while \emph{simultaneously} avoiding systematic disparities across user groups when preferences are heterogeneous.
\begin{figure}[t]
    \centering
    \begin{subfigure}[t]{0.98\linewidth}
        \centering
        \includegraphics[width=0.93\linewidth]{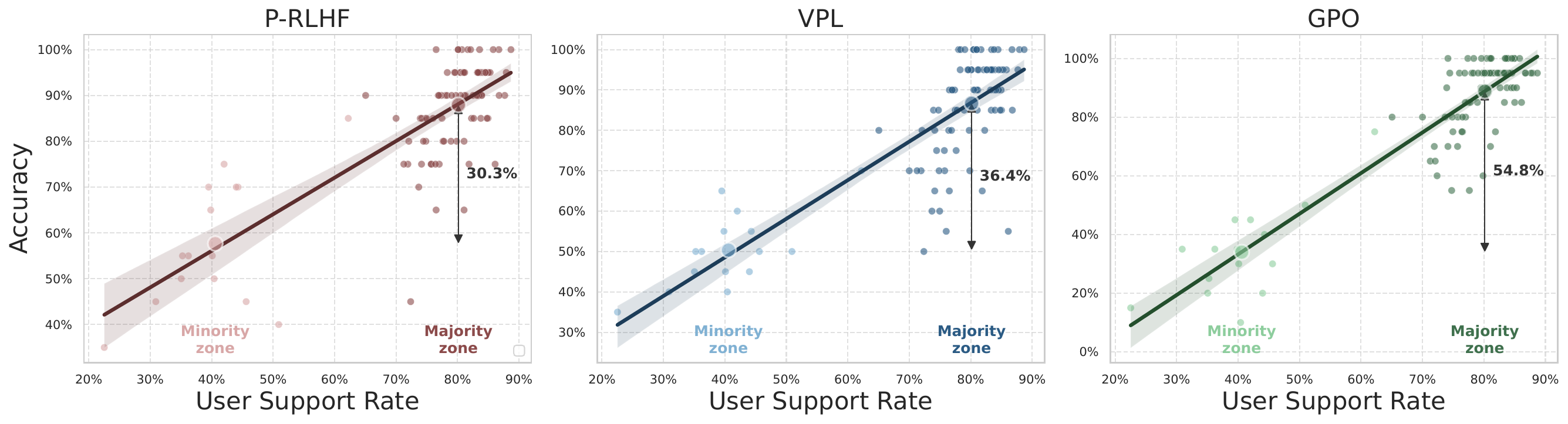}
        \caption{Personal-LLM: Accuracy increases with user support rate.}
        \label{fig:unfair_personal}
    \end{subfigure}
    \begin{subfigure}[t]{0.98\linewidth}
        \centering
        \includegraphics[width=0.93\linewidth]{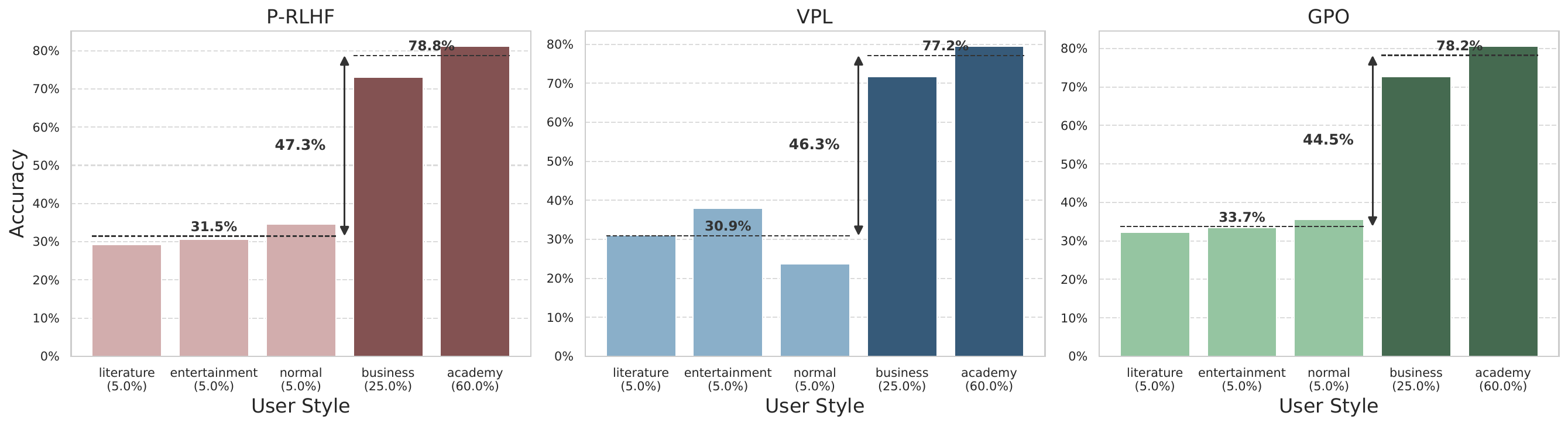}
        \caption{DSP: Minority preference styles receive substantially lower accuracy than majority styles.}
        \label{fig:unfair_dsp}
    \end{subfigure}
    \caption{Preliminary analysis on the Personal-LLM and DSP datasets across P-RLHF, VPL, and GPO methods, showing user-level unfairness in existing personalized RMs. }
    \label{fig:unfairness_problem_all}
    \vspace{-15pt}
\end{figure}

\paragraph{Analysis.}
We first examine whether existing personalized RMs provide comparable reward modeling quality across users with different levels of preference support. 
Figure~\ref{fig:unfairness_problem_all} reports user-level accuracy of three representative personalized reward modeling methods, P-RLHF, VPL, and GPO, on Personal-LLM and DSP. 
On Personal-LLM, accuracy increases strongly with user support rate, indicating that users whose preferences are less supported in the population receive lower accuracy. 
On DSP, minority preference styles obtain substantially lower accuracy than majority styles across all three methods. 
These results reveal a support-structured disparity in personalized reward modeling.

This phenomenon is notable because personalized RMs are designed to handle preference heterogeneity. 
A natural expectation is that conditioning the reward model on user-specific information $h$ should allow $r_\theta(x,h,y)$ to express different preferences for different users. 
However, Figure~\ref{fig:unfairness_problem_all} shows that personalization capacity alone does not guarantee equitable reward modeling quality: users with low-support preference patterns still receive systematically lower accuracy.

We attribute this phenomenon to preference-support imbalance in the training data and the aggregate nature of the BTL objective. 
First, user-specific signals differ in both quantity and quality. 
Some users provide stable preference pairs that align with widely shared patterns, while others provide fewer or noisier pairs that reflect less-supported preferences. 
Second, the BTL objective aggregates over the full dataset, so preferences that are better supported in the population contribute more frequent and more consistent gradient signals. 
As a result, even when the model architecture supports personalization, training can still be dominated by majority-aligned preference patterns, leaving low-support users under-supervised. 
Thus, personalization may inherit preference-support imbalance rather than eliminate it.

%% file: sections/3_method.tex
\section{Method}
\begin{figure}[t]
    \centering
    \includegraphics[width=0.95\linewidth]{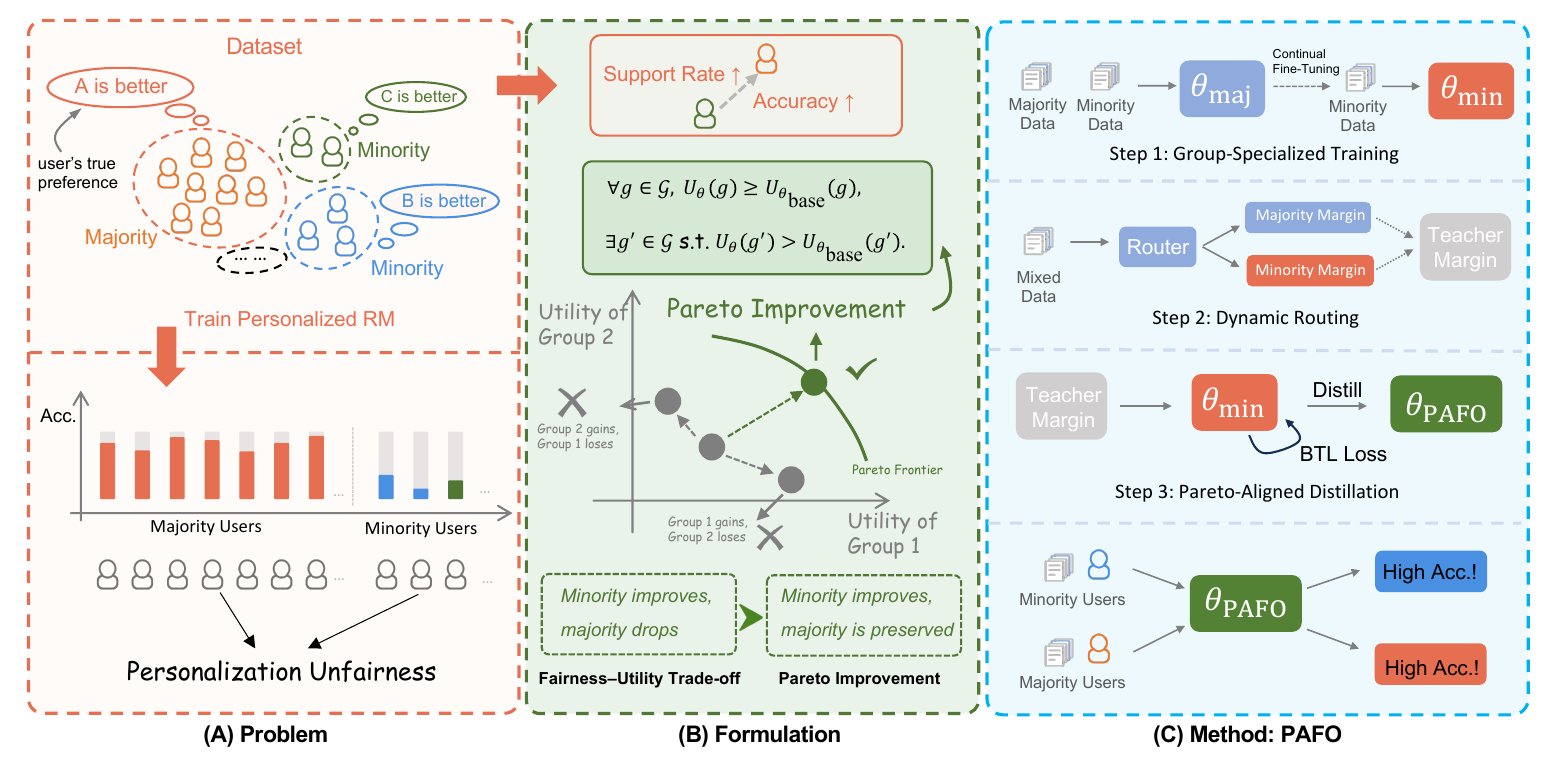}
    \caption{Overview of PAFO. (A) Imbalanced preference support leads to user-level unfairness in personalized reward modeling. (B) PAFO targets Pareto improvement, improving minority users without degrading majority users. (C) It trains group-specialized reward models and distills group-conditioned margin supervision into a single deployable personalized RM.}
    \label{fig:frame_work}
\end{figure}

In this section, to solve the imbalance problem, we first introduce the concept of Pareto fairness, then propose a principled distillation method and a lightweight model structure.

\subsection{Pareto Fairness}
We assume the users can be divided into $n$ groups, denoted as $\mathcal{G}=\{1,2,\ldots,n\}$, with each group showing a different preference. Without loss of generality, let the first group ($g=1$) be the majority group, and all remaining groups are minorities. Let $\mathcal{D}_g$ denote the data distribution of group $g \in \mathcal{G}$, and let $U_{\theta}(g)$ denote the expected utility of group $g$ under model parameters $\theta$. For example, when we train the reward model, we can take the negative BTL loss, i.e., $U_{\theta}(g)=\mathbb{E}_{(s^A, s^B) \sim \mathcal{D}_g}\!\left[\log \sigma\!\left(r_\theta(s^A) - r_\theta(s^B)\right)\right]$, as the utility.
Based on this formulation, we define a {Pareto improvement} as follows: relative to a baseline model $\theta_{\mathrm{base}}$, a new model $\theta$ does not decrease the utility of any group of interest and achieves a strict improvement for at least one group, which can be formulated as:
\begin{equation}
\label{pareto improvement}
    \forall g \in \mathcal{G},\; U_{\theta}(g) \ge U_{\theta_{\mathrm{base}}}(g),
    \qquad
    \exists g' \in \mathcal{G}\; \text{s.t.}\; U_{\theta}(g') > U_{\theta_{\mathrm{base}}}(g').
\end{equation}

Meanwhile, we define Pareto efficiency (also called Pareto fairness in our paper) as an ideal state in which the model cannot be further improved by any Pareto improvement, i.e., improving one group's utility must harm another group's utility. A naive way is to train models for each user group separately to solve this problem. However, it may significantly increase the costs of model storage, deployment, and inference. In addition, the group label of users is often unavailable at inference time. Therefore, we need a new training mechanism to train a unified model to make predictions in the following sections.

\subsection{Necessary Condition of Pareto Fairness}

Because sufficient and necessary conditions are difficult to derive, many previous works have focused only on the necessary conditions. We first introduce the following lemma:

\begin{lemma}[Necessary condition~\cite{sener2018multi}]
    Any solution $\theta$ that satisfies Pareto fairness must satisfy the following condition:
    \begin{equation}
    \exists~ \alpha_g>0,~g\in \{1, 2, \ldots,n\},~\sum_{g=1}^{n} \alpha_{g} = 1, ~\text{s.t.}\quad \sum_{g=1}^{n} \alpha_{g} \nabla_{\theta} U_{\theta}(g) = 0.
\end{equation}
\end{lemma}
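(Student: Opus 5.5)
We argue by contraposition, in the style of the Fritz John conditions for multi-objective optimization. Suppose $\theta$ is Pareto fair, meaning no Pareto improvement in the sense of \eqref{pareto improvement} exists, at least within a neighbourhood of $\theta$. Assume each $U_\theta(g)$ is continuously differentiable in $\theta$, which holds for the negative BTL utility whenever $r_\theta$ is smooth.

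\textbf{Step 1: no common ascent direction.} Suppose some direction $d$ satisfies $\nabla_\theta U_\theta(g)^\top d>0$ for every $g\in\mathcal{G}$. A first-order Taylor expansion gives
\[
U_{\theta+td}(g)=U_\theta(g)+t\,\nabla_\theta U_\theta(g)^\top d+o(t).
\]
So for all sufficiently small $t>0$, every group utility strictly increases. This is a Pareto improvement, which contradicts Pareto fairness. Hence the linear system $G^\top d>0$ has no solution $d$, where $G=[\nabla_\theta U_\theta(1),\ldots,\nabla_\theta U_\theta(n)]$.

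\textbf{Step 2: theorem of the alternative.} By Gordan's theorem, infeasibility of $G^\top d>0$ is equivalent to the existence of $\lambda\ge 0$ with $\lambda\neq 0$ and $G\lambda=0$. Setting $\alpha_g=\lambda_g/\sum_{g'}\lambda_{g'}$ gives weights $\alpha_g\ge 0$ with $\sum_g\alpha_g=1$ and $\sum_g\alpha_g\nabla_\theta U_\theta(g)=0$. Equivalently, $0$ lies in the convex hull of the group gradients. An alternative route is to separate $0$ from this convex hull with a hyperplane, which yields exactly a direction $d$ of the kind excluded in Step 1. This is the stationarity condition of \cite{sener2018multi}.

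\textbf{Main obstacle: strict positivity.} The hardest point is the requirement $\alpha_g>0$ in the statement. Gordan's theorem only delivers $\alpha_g\ge 0$, and strict positivity fails in general. For example, take $n=2$ with $U(1)$ at an interior maximum and $\nabla U(2)\neq 0$. That point is Pareto optimal, yet it forces $\alpha_2=0$. I would therefore proceed in two ways.
\begin{itemize}
\item Prove the lemma with $\alpha_g\ge 0$, which is the form in \cite{sener2018multi}.
\item Identify an extra hypothesis under which $\alpha_g>0$ can be recovered. One candidate is that every proper subfamily of the gradients admits a common ascent direction. Then any solution with some $\alpha_g=0$ would violate Gordan's theorem on the remaining subfamily, so all $\alpha_g>0$.
\end{itemize}
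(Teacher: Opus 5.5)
Your proof is correct, and you are right that the lemma as written, with $\alpha_g>0$, is false. What you prove is $\alpha_g\ge 0$, i.e.\ $0\in\mathrm{conv}\{\nabla_\theta U_\theta(g)\}_{g}$. That is the Pareto-stationarity condition Sener--Koltun actually state, and it is the version the paper itself uses afterwards: the simplex in \eqref{eq:4} has $\alpha_g\ge 0$, and Theorem~2 refers to Pareto stationary points. Strictly positive weights are the natural form of a \emph{sufficient} condition: under concavity, stationarity of a positively weighted sum implies Pareto optimality. That is probably where the paper's slip comes from.

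Your counterexample needs one fix: require a \emph{strict} local maximum of $U(1)$. If $U(1)$ is flat near $\theta$, then moving along $\nabla U(2)$ is a Pareto improvement, so the point is not Pareto fair. A clean instance is $\theta\in\mathbb{R}$, $U_\theta(1)=-\theta^2$, $U_\theta(2)=\theta$, at $\theta=0$. Every $\theta'\neq 0$ strictly lowers $U(1)$, yet the gradients $0$ and $1$ force $\alpha_2=0$. The same failure occurs with genuine BTL utilities built from margins $-\theta^2$ and $\theta$, since $\log\sigma$ is strictly increasing. Your extra hypothesis does recover $\alpha_g>0$, by applying Gordan to the support of $\alpha$; for $n=2$ it simply says both gradients are nonzero.

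The paper gives no proof of its own. It cites the lemma and then invokes D\'esid\'eri's dichotomy for \eqref{eq:4}: the min-norm point $w^\ast$ of the convex hull is either $0$, or it satisfies $\langle w^\ast,\nabla_\theta U_\theta(g)\rangle\ge\|w^\ast\|^2>0$ for every $g$ and is therefore a common ascent direction. Combined with your Step~1, this is exactly your ``alternative route.'' Your Gordan step states the same fact as a theorem of the alternative, and Gordan's theorem is usually proved by this very separation argument. The two packagings buy different things:
\begin{itemize}
\item The min-norm version is constructive: it produces the specific weights and direction $d^\ast$ that the paper later compares against.
\item The Gordan version is shorter and makes the support of $\alpha$ explicit, which is what you need to decide when strict positivity can hold.
\end{itemize}
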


Therefore, given the model parameter $\theta$, the problem of judging whether $\theta$ satisfies the necessary condition of Pareto fairness can be transferred to find $\alpha_t$ to minimize the following equation and examining whether the minimum of the following optimization problem is greater than $0$:
\begin{equation}
    \min_{\alpha_{1}, \dots, \alpha_{n}} \left\{ \left\| \sum_{g=1}^{n} \alpha_{t} \nabla_{\theta} U_{\theta}(g) \right\|_{2}^{2} \;\middle|\; \sum_{g=1}^{n} \alpha_{g} = 1, \alpha_{g} \ge 0, \; \forall g \right\}
    \label{eq:4}
\end{equation}

Previous study \cite{desideri2012multiple} shows that either the solution to this optimization problem is 0, resulting in a point that satisfies the necessary conditions, or the solution gives a descent direction that improves the utility of all groups. That is to say, we have a guaranteed model update direction once obtaining the weights. However, in the reward modeling setting of large language models, directly computing the explicit weight is usually infeasible. On the one hand, it is impractical to find an explicit solution for Equation \ref{eq:4} when $n > 2$. On the other hand, explicitly solving for the corresponding weights at every training step would introduce high computational cost and optimization instability.
Therefore, rather than directly solving for this explicit direction, we design a trainable approximation method to solve this problem.

\subsection{Framework Overview}

In this section, we provide a brief overview of the proposed Pareto Fairness Optimization (PAFO) framework for personalized reward modeling to approximate the weight. The core intuition is that a Pareto-aligned update direction can be achieved by dynamically assigning higher optimization weights to groups that fall behind an implicit ideal target, until achieving the balance, i.e., all groups have similar performance. Overall, PAFO consists of three stages. First, we construct reward models in each group to mimic the best possible performance. Second, we randomly choose a minority group to initialize the parameters, then further finetune the model based on the naive BTL loss and distillation loss to achieve the Pareto fairness.

For the model structure, as shown in Figure \ref{fig:frame_work}, we jointly use two types of personalized signals.
The first is a parameterized personalized representation based on the user ID: inspired by prior work\citep{li2024personalized}, we introduce a lightweight user model that maps each user identifier to a learnable user embedding, and prepend this embedding to the very beginning of the model input so as to explicitly encode user preferences and guide the model to form personalized reward judgments.
The second is contextual personalization based on user historical behavior: for the current query $x$, we additionally select multiple historical interaction samples of the same user and prepend them before $x$ as explicit historical context for the model.
Together, these two types of signals constitute the personalized conditional input of the reward model.

\subsection{Group-Specialized Training}

Since the group labels are available in the training stage, we can train group-specialized reward models in each group. Specifically, given a training sample $d_i = (x_i,h_i,y_i^w,y_i^l, g_i)$, where \(x_i\) denotes the input query, \(h_i\) denotes the user history, \(y_i^w\) and \(y_i^l\) denote the preferred response and rejected response, and $g_i$ denotes the user's group, respectively. The reward model \(r_\theta\) assigns scores to candidate responses, and the reward margin is defined as $m_{\theta}(i) = r_\theta(x_i,h_i,y_i^w) - r_\theta(x_i,h_i,y_i^l)$. We optimize the reward model with the standard Bradley--Terry loss:
\begin{equation}
    \mathcal{L}_{\mathrm{BTL}}(\theta)
    = - \mathbb{E}_{i\sim\mathcal{D}}
    \left[
    \log \sigma(m_{\theta}(i))
    \right],
\end{equation}
where \(\sigma(\cdot)\) is the sigmoid function. Since the majority group usually dominates in the training data, a personalized reward model trained directly on the full dataset tends to learn the preference structure of the majority group. 
Therefore, we regard the reward model trained on the mixed data as a majority-oriented reward model, denoted by $r_{\theta_{1}}$, and the corresponding margin as $m_{\theta_1}$. 
This model can capture high-support preference patterns relatively well, but is insufficient for modeling low-support preference users.

To further enhance the model's ability to capture the preference structure of the minority group, we continue fine-tuning \(r_{\theta_{1}}\) on minority-group data \(\mathcal{D}^g_{\mathrm{min}}\), obtaining a minority-specialized reward model:
\begin{equation}
    \theta_g
    =
    \arg\min_{\theta}
    \mathcal{L}_{\mathrm{BTL}}^{\mathrm{min}}(\theta),
    \text{ where }
    \mathcal{L}_{\mathrm{BTL}}^{\mathrm{min}}(\theta)
    =
    -
    \mathbb{E}_{i\sim\mathcal{D}_{\mathrm{min}}^g}
    \left[
    \log \sigma(m_{\theta}(i))
    \right],
\end{equation}
where $g \in \{2,\ldots,n\}$. The minority-specialized model \(r_{\theta_g}\) can capture the preference patterns of low-support users more adequately. Until now, we have obtained $n$ models, representing the best possible performance on $n$ groups, respectively.

\subsection{Distillation Objective}

After finetuning the models, we aim to dynamically assign optimization weights to groups based on the performance to train a unified model by distillation. Different from the traditional single-teacher distillation paradigm, we do not simply use the output of a single model as the supervision signal. 
Instead, we propose to construct a conditional teacher based on group priors at the reward margin level. 
Specifically, for any training sample, we define the final teacher margin as $m_T(x_i,h_i,y_i^w,y_i^l)=m_{\theta_g}$ if $g_i=g$, with $g \in \{1,2,\ldots,n\}$.

To protect the utility in the minority group, we randomly choose $\theta_g$ with $g \in \{2,\ldots,n\}$ to initialize the student, i.e., $
    \theta_S \leftarrow \theta_{g}.
$ For any training sample $d_i$, the student model produces the corresponding reward margin $m_S(d_i)$. To enable the student to learn the group preference boundary defined by the teacher, we impose a soft-label distillation constraint at the level of pairwise preference probability. Specifically, we map the teacher margin and the student margin into pairwise preference probabilities through the sigmoid function $q_T(i)=\sigma(m_T(d_i)),\ q_S(i)=\sigma(m_S(d_i))$.
We then define the margin-level distillation loss as
\begin{equation}
    \mathcal{L}_{\mathrm{distill}}
    =
    \mathbb{E}_{d_i\sim\mathcal{D}}
    \left[
    \mathrm{CE}
    \left(
    q_T(i), q_S(i)
    \right)
    \right],
\end{equation}
where \(\mathrm{CE}(\cdot,\cdot)\) denotes the cross-entropy loss. Meanwhile, to prevent the student from drifting away from the true preference-pair distribution during distillation, we retain the standard BTL loss as hard-label supervision. Our final optimization objective is to train a unified model using the following loss:
\begin{equation}
    \label{final loss}
    \mathcal{L}_{PAFO}
    =
    \alpha \cdot \mathcal{L}_{BTL}
    +
    ( 1-\alpha ) \cdot \mathcal{L}_{\mathrm{distill}}
\end{equation}
where \(\alpha\) is a hyperparameter controlling the strength of distillation.

\subsection{Theoretical Results}

In Section 3.2, we show that explicit Pareto optimization requires solving an optimization problem, which is impractical for large language models. In this section, we provide a theoretical analysis to show how the proposed distillation method approximates the optimal Pareto-aligned optimization. First, we introduce some mild assumptions below:

\textbf{Assumption 1} (Local Smoothness). \textit{The utility functions $U_g(\theta)$ are continuously differentiable and $L$-smooth for all $g \in \mathcal{G}$. Furthermore, the sample-level margin gradients are bounded, such that $\|\nabla_{\theta} m_{\theta}(i)\| \le C_\phi$ for all data instances.}

The most important thing we need to show is that our method can approximate the ground-truth update direction.

\begin{theorem}[Update direction]
    {The negative gradient of the distillation loss $v = - \nabla_{\theta_S} \mathcal{L}_{distill}$ is exactly composed of an adaptive conic combination of the true utility gradients plus a covariance term:}
\begin{equation}
    v = \sum_{g=1}^n \lambda_g \nabla U_{\theta_S}(g) + \mathcal{E}_{cov}, \quad \text{with } \lambda_g \ge 0,
\end{equation}
{where $\lambda_g = \mathbb{E}_{\mathcal{D}_g}[\alpha_g(i)]$, $\alpha_g(i) = \frac{\sigma(m_g(i)) - \sigma(m_S(i))}{1 - \sigma(m_S(i))}$, $\mathcal{E}_{cov} = \sum_{g=1}^n e_g$, and $e_g = \text{Cov}_{\mathcal{D}_g}(\alpha_g(i), (1 - \sigma(m_S(i))) \nabla_{\theta_S} m_S(i))$.}
\end{theorem}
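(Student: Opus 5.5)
The plan is a direct computation: differentiate the cross-entropy through the sigmoid, factor the residual $q_T-q_S$ so that the per-sample BTL utility gradient appears, and then split each within-group expectation of a product into a product of expectations plus a covariance.

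\textbf{Step 1: per-sample gradient.} Write $\mathrm{CE}(q_T,q_S)=-q_T\log q_S-(1-q_T)\log(1-q_S)$ with $q_S=\sigma(m_S(i))$. Using $\sigma'=\sigma(1-\sigma)$, the derivative with respect to $m_S(i)$ is $-(q_T(i)-q_S(i))$. The teacher margin does not depend on $\theta_S$, so the chain rule gives
\[
-\nabla_{\theta_S}\mathrm{CE}\bigl(q_T(i),q_S(i)\bigr)=\bigl(q_T(i)-q_S(i)\bigr)\nabla_{\theta_S}m_S(i).
\]
Assumption 1 (bounded margin gradients) justifies exchanging gradient and expectation. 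This yields $v=\mathbb{E}_{\mathcal{D}}[(q_T-q_S)\nabla m_S]$.

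\textbf{Step 2: split by group and factor.} Since $\mathcal{D}$ is a mixture of the $\mathcal{D}_g$ and the conditional teacher sets $q_T(i)=\sigma(m_g(i))$ on $\mathcal{D}_g$, I decompose $v$ into per-group expectations. I will normalize so that $\mathbb{E}_{\mathcal{D}}=\sum_g\mathbb{E}_{\mathcal{D}_g}$; otherwise the mixture proportions $\pi_g$ are absorbed into $\lambda_g$ and $e_g$, which is harmless because $\pi_g\ge0$.

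On each group I use the identity $\sigma(m_g)-\sigma(m_S)=\alpha_g(i)\,(1-\sigma(m_S(i)))$. This is just the definition of $\alpha_g$, and it is well defined because $1-\sigma(m_S)>0$. Next I set $X_i=(1-\sigma(m_S(i)))\nabla m_S(i)$. Since $\nabla_\theta\log\sigma(m)=(1-\sigma(m))\nabla m$, we have exactly $\nabla U_{\theta_S}(g)=\mathbb{E}_{\mathcal{D}_g}[X_i]$. Hence each group term is $\mathbb{E}_{\mathcal{D}_g}[\alpha_g(i)X_i]$.

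\textbf{Step 3: covariance split.} The elementary identity $\mathbb{E}[\alpha X]=\mathbb{E}[\alpha]\,\mathbb{E}[X]+\mathrm{Cov}(\alpha,X)$ is applied componentwise to the vector $X$. It gives
\[
\mathbb{E}_{\mathcal{D}_g}[\alpha_g X]=\lambda_g\nabla U_{\theta_S}(g)+e_g.
\]
Summing over $g$ yields the claimed form, with $\mathcal{E}_{cov}=\sum_g e_g$. Boundedness of $\alpha_g\le 1$ and of $\|X_i\|\le C_\phi$ ensures that all moments exist.

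\textbf{Main obstacle: the sign $\lambda_g\ge0$.} The algebra above is exact, but $\alpha_g(i)$ can be negative pointwise. So $\lambda_g=\mathbb{E}_{\mathcal{D}_g}[\alpha_g]\ge0$ holds exactly when, on average, the specialist assigns at least as much normalized preference probability to the chosen response as the student does.

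I would argue this from the construction. Each $\theta_g$ minimizes the group-$g$ BTL loss, and the student is initialized from a specialist, so along training the teacher dominates the student on its own group: $\mathbb{E}_{\mathcal{D}_g}[\sigma(m_g)-\sigma(m_S)]\ge0$ in the relevant weighted sense. If this cannot be derived rigorously, I would state it explicitly as the interpretation of ``teacher $g$ is the best achievable model on group $g$''. In that case $\lambda_g$ is read as the clipped, nonnegative adaptive weight, and any residual negative part is absorbed into the error term.

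The adaptive interpretation then follows: groups where the student lags furthest behind their specialist receive larger $\lambda_g$, mirroring the MGDA weights of Equation~\ref{eq:4}.
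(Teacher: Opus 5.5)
Your Steps 1--3 are the paper's proof. The paper also writes $v=\sum_g\mathbb{E}_{\mathcal{D}_g}[\omega_g(i)\nabla_{\theta_S}m_S(i)]$ with $\omega_g=\sigma(m_g)-\sigma(m_S)$ and factors $\omega_g=\alpha_g(i)(1-\sigma(m_S(i)))$. It identifies $\nabla U_{\theta_S}(g)$ as the group mean of $(1-\sigma(m_S))\nabla_{\theta_S}m_S$ and splits the product expectation into $\lambda_g\nabla U_{\theta_S}(g)+e_g$. Your explicit cross-entropy derivative and your remark on the mixture proportions $\pi_g$ (which the paper silently normalizes away) are correct refinements. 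One slip: $\alpha_g\le 1$ is only an upper bound, since $\alpha_g=1-(1-\sigma(m_g))(1+e^{m_S})$ is unbounded below, so integrability of $\alpha_g$ does not follow from it. It is trivial for a finite empirical $\mathcal{D}_g$, and $\alpha_g X_i=\omega_g\nabla_{\theta_S}m_S$ is bounded by $C_\phi$ anyway.

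The genuine gap is the sign $\lambda_g\ge 0$, and your plan to derive it from the construction fails for three reasons.
(i) $\theta_1$ is trained on the full mixed data, not on $\mathcal{D}_1$, so it is not a group-1 BTL minimizer.
(ii) Even an exact group-$g$ minimizer only gives $\mathbb{E}_{\mathcal{D}_g}[\log\sigma(m_g)]\ge\mathbb{E}_{\mathcal{D}_g}[\log\sigma(m_S)]$. But $\lambda_g\ge 0$ is equivalent to $\mathbb{E}_{\mathcal{D}_g}\bigl[(1-\sigma(m_g(i)))/(1-\sigma(m_S(i)))\bigr]\le 1$, a different functional. Two equally weighted pairs with $(\sigma(m_g),\sigma(m_S))=(0.99,0.5)$ and $(0.5,0.9)$ give the teacher the lower BTL loss but $\lambda_g=(0.98-4)/2<0$.
(iii) Training erodes dominance rather than preserving it, because the hard-label term pushes the student past the teacher. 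If each pair's margin could be fitted freely, stationarity of $\alpha\mathcal{L}_{BTL}+(1-\alpha)\mathcal{L}_{distill}$ (with $\alpha\in(0,1)$ the mixing weight) gives $\sigma(m_S)-\sigma(m_T)=\frac{\alpha}{1-\alpha}(1-\sigma(m_S))$. That is, $\alpha_g(i)=-\alpha/(1-\alpha)<0$ on every pair.

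Your clipping fallback changes the theorem. The remainder becomes $\mathcal{E}_{cov}+\sum_g\min(\lambda_g,0)\nabla U_{\theta_S}(g)$, which is not a covariance and need not be small when $\mathcal{E}_{cov}$ is, so it would break the use of $\nu$ in Theorem 2.

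The paper does not derive the sign either; its proof simply asserts pointwise dominance $\sigma(m_g(i))\ge\sigma(m_S(i))$. The correct repair is to state dominance as an explicit hypothesis. Your average-level condition $\mathbb{E}_{\mathcal{D}_g}[\alpha_g(i)]\ge 0$ is exactly necessary and sufficient and is weaker than the paper's pointwise assumption. With that hypothesis your proof is complete.
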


The proof is in Appendix \ref{app:proof}. Building upon this decomposition, we can directly derive the following theorem to ensure a large enough cosine similarity between our update direction and the optimal direction when the covariance is small.

\begin{theorem}[Similarity between ours and optimal]
Let $d^*(\theta_S)$ be the Pareto optimal direction, with weight $(\alpha_1,\alpha_2,\ldots,\alpha_g)$ solved from Equation \ref{eq:4}. Let $v_{cone} = \sum_{g=1}^n \lambda_g \nabla U_{\theta_S}(g)$, $\mu = \cos(v_{cone}, d^*(\theta_S))$, and $\nu = \frac{\|\mathcal{E}_{cov}\|}{\|v_{cone}\|}$. As long as the current parameter $\theta_S$ is not at a Pareto stationary point, we have $\mu > 0$. Furthermore, we have
\begin{equation}
    \cos(v, d^*(\theta_S)) \ge \frac{\mu - \nu}{1 + \nu}
\end{equation}
\end{theorem}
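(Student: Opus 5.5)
We treat the two claims separately: first show $\mu>0$ using the structure of the min-norm problem in Equation~\ref{eq:4}, then get the cosine bound from the decomposition $v = v_{cone}+\mathcal{E}_{cov}$ of Theorem~1 by elementary vector inequalities.

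\textbf{The sign of $\mu$.} Write $d^* = \sum_g \alpha_g \nabla U_{\theta_S}(g)$ with $(\alpha_g)$ the minimizer of Equation~\ref{eq:4}, i.e.\ $d^*$ is the minimum-norm element of the convex hull $\mathcal{C}$ of $\{\nabla U_{\theta_S}(g)\}_{g=1}^n$. Since $\theta_S$ is not Pareto stationary, $d^*\neq 0$.

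The key step is the variational inequality for projecting the origin onto a closed convex set: $\langle u, d^*\rangle \ge \|d^*\|^2$ for every $u\in\mathcal{C}$. Applied to each vertex, it gives $\langle \nabla U_{\theta_S}(g), d^*\rangle \ge \|d^*\|^2>0$ for all $g$. This is exactly the result of \cite{desideri2012multiple} that $d^*$ is a common ascent direction.

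Then
\[
\langle v_{cone}, d^*\rangle=\sum_g \lambda_g\langle \nabla U_{\theta_S}(g), d^*\rangle \ge \Big(\sum_g\lambda_g\Big)\|d^*\|^2 .
\]
This is strictly positive as long as some $\lambda_g>0$. We also need $v_{cone}\neq 0$ so that $\mu$ and $\nu$ are defined; this follows from the same inequality. So $\mu>0$.

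The main obstacle is this nondegeneracy. Theorem~1 only gives $\lambda_g\ge 0$, so we must assume, or argue, that not all $\lambda_g$ vanish. The natural argument: if all $\lambda_g=0$, then $v_{cone}=0$ and the statement is vacuous because $\nu$ is undefined. We will state explicitly that the theorem is understood for $v_{cone}\ne 0$, equivalently that the student has not yet matched every teacher margin in expectation.

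\textbf{The cosine bound.} Expand
\[
\langle v,d^*\rangle=\langle v_{cone},d^*\rangle+\langle \mathcal{E}_{cov},d^*\rangle \ge \mu\|v_{cone}\|\|d^*\| - \|\mathcal{E}_{cov}\|\|d^*\| ,
\]
using Cauchy--Schwarz on the covariance term. This equals $(\mu-\nu)\|v_{cone}\|\|d^*\|$. By the triangle inequality, $\|v\|\le \|v_{cone}\|+\|\mathcal{E}_{cov}\|=(1+\nu)\|v_{cone}\|$.

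If $\mu-\nu\ge 0$, dividing the numerator bound by $\|v\|\|d^*\|$ and using the upper bound on $\|v\|$ gives
\[
\cos(v,d^*)\ge \frac{\mu-\nu}{1+\nu}.
\]
If $\mu-\nu<0$, the right-hand side is at most $-\nu/(1+\nu)$ in magnitude regime, so we need a separate check. In that case we would instead use $\|v\|\ge(1-\nu)\|v_{cone}\|$, or more simply observe the following. Since $|\mu-\nu|\le 1+\nu$, the bound $(\mu-\nu)/(1+\nu)\ge -1$ holds trivially only if $\mu-\nu\ge-(1+\nu)$, which is true because $\mu\ge -1$. But trivial validity is not enough; we need $\cos\ge(\mu-\nu)/(1+\nu)$ itself.

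Dividing a negative lower bound by $\|v\|\le(1+\nu)\|v_{cone}\|$ goes the wrong way. So for the negative case we will argue geometrically. Let $\phi$ be the angle between $v_{cone}$ and $d^*$. The set $\{v_{cone}+e:\|e\|\le\nu\|v_{cone}\|\}$ deviates in angle from $v_{cone}$ by at most $\arcsin(\min(\nu,1))$. We will then check that $\cos(\phi+\arcsin\nu)\ge(\mu-\nu)/(1+\nu)$. This reduces to an elementary scalar inequality in $(\mu,\nu)$, which we expect to verify directly; for $\nu\ge1$ it must be handled as its own subcase.

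This case split is the only real technical wrinkle in the cosine bound; the positive case is immediate.
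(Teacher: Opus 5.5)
Your argument for $\mu>0$ is correct. It combines the min-norm inequality $\langle\nabla U_{\theta_S}(g),d^*\rangle\ge\|d^*\|^2$, the nondegeneracy $v_{cone}\neq 0$ that you rightly flag, and the sign $\lambda_g\ge 0$ taken from Theorem~1. Your bound in the case $\mu\ge\nu$ is also correct. The paper gives no proof of this theorem beyond saying it follows directly from Theorem~1, and your positive-case computation is presumably what is meant.

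The gap is the case $\mu<\nu$. Your instinct that the division goes the wrong way was right. The scalar inequality you plan to verify there is false, and so is the theorem itself.

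For $0\le\mu<\nu<1$ your worst case is attained. Take $\mathcal{E}_{cov}$ of norm $\nu\|v_{cone}\|$ in the plane of $v_{cone}$ and $d^*$, orthogonal to $v=v_{cone}+\mathcal{E}_{cov}$ and turning $v_{cone}$ away from $d^*$. This gives $\cos(v,d^*)=\mu\sqrt{1-\nu^2}-\nu\sqrt{1-\mu^2}$. Now set $h(\mu)=\mu\sqrt{1-\nu^2}-\nu\sqrt{1-\mu^2}-\frac{\mu-\nu}{1+\nu}$. It satisfies $h''(\mu)=\nu(1-\mu^2)^{-3/2}>0$, $h(0)=-\nu^2/(1+\nu)<0$ and $h(\nu)=0$. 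By convexity, $h<0$ on $[0,\nu)$. For example, $\mu=0.1$, $\nu=0.5$ gives $\cos(v,d^*)\approx-0.41$, against a claimed lower bound of about $-0.27$.

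For $\nu>1$, $v$ can be antiparallel to $d^*$, and this already occurs in the paper's own model. Take $n=1$ and a linear margin $m_S(i)=\langle\theta,\phi_i\rangle$ at $\theta_S=0$. Use two equally likely pairs with $\phi_2=-2\phi_1\neq 0$, and teacher margins with $\sigma(m_1(1))=3/4$, $m_1(2)=0$, so that $\alpha_1(i)\ge 0$ as Theorem~1 requires. Then $d^*=\nabla U_{\theta_S}(1)=-\tfrac14\phi_1\neq 0$, $v=\tfrac18\phi_1$, $v_{cone}=-\tfrac1{16}\phi_1$ and $\mathcal{E}_{cov}=\tfrac3{16}\phi_1$. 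Hence $\mu=1$ and $\nu=3$, yet $\cos(v,d^*)=-1<-\tfrac12=\frac{\mu-\nu}{1+\nu}$, at a point that is not Pareto stationary.

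So no case analysis, geometric or otherwise, can close this step. The statement needs the extra hypothesis $\nu\le\mu$. That is exactly the small-covariance regime the paper appeals to, and the only one where the right-hand side is nonnegative; under it, your first case is already a complete proof.

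If you want a bound valid for all $\nu<1$, your own observation $\|v\|\ge(1-\nu)\|v_{cone}\|$ gives $\cos(v,d^*)\ge\frac{\mu-\nu}{1-\nu}$ when $\mu<\nu$. The sharp bound is $\mu\sqrt{1-\nu^2}-\nu\sqrt{1-\mu^2}$. For $\nu>1$, nothing beyond the trivial bound $-1$ holds in terms of $\mu$ and $\nu$ alone.
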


A critical question arises when $\mu$ is close to zero. However, PAFO prevents orthogonal degeneration ($\mu \to 0$) through a dynamic negative feedback loop. Since both the Pareto optimum direction and our directions assign higher weights to underperforming groups to prevent utility degradation, if the update trajectory temporarily diverges from $d^*$, the student's prediction margin $\sigma(m_S)$ on that group deteriorates. Consequently, the implicit weight vector $(\lambda_1,\lambda_2,\ldots,\lambda_g)$ and the optimal Pareto weight vector $(\alpha_1,\alpha_2,\ldots,\alpha_g)$ exhibit strong positive correlation throughout the trajectory.

%% file: sections/4_experimentv1.tex
\section{Experiments}

\subsection{Experimental Settings}

\textbf{Datasets.} We use two datasets, \textit{Personal-LLM}~\cite{zollo2024personalllm} and \textit{DSP}~\cite{cheng2023everyone}, both adapted to expose support-imbalanced user populations. On Personal-LLM, the bottom 15\% of users by support rate form the minority. On DSP, each user is assigned one of five preference types, with three minority types each held by 5\% of users and two majority types held by 25\% and 60\%. Dataset construction, support-rate definitions, and group statistics are in Appendix~\ref{appendix:data-details}--\ref{appendix:databias}.

\input{figures/main_results}

\paragraph{Compared Methods.} We compare PAFO with six baselines covering three categories. 1) \textit{Standard reward modeling}: \textbf{Bradley-Terry-Luce (BTL)} \citep{bradley1952rank} adds a linear scoring head on the base LLM, the standard RLHF reward modeling approach. 2) \textit{Personalized reward modeling}: \textbf{Variational Preference Learning (VPL)}\citep{poddar2024personalizing} learns latent user representations via a variational autoencoder; \textbf{Group Preference Optimization (GPO)}\citep{zhao2023group} learns user representations through a Transformer with meta-learning; \textbf{Personalized-RLHF (P-RLHF)}\citep{zollo2024personalllm} injects user embeddings into the base LLM's input representations through a learnable user model. 3) \textit{Fairness interventions}: \textbf{Reweighting} upweights minority-group samples during training; \textbf{Regularization (Reg)} adds a regularizer on user-level reward distributions to reduce cross-user disparity. The implementation details for all methods are in Appendix~\ref{app:baseline_details}.

\paragraph{Evaluation Metrics.} We evaluate both utility and fairness. For utility, we report \textit{Overall Accuracy} (Acc.), \textit{Minority Accuracy}(Min Acc.), and \textit{Majority Accuracy} (Maj Acc.); the latter two reveal whether minority gains come at the majority's expense. For fairness, we report three metrics on the per-user Accuracy distribution: \textit{Coefficient of Variation (CV)} for the relative dispersion of user Acc.uracies, \textit{Gini coefficient (GINI)} for inequality, and \textit{Accuracy--Support Rate Slope (Slope)} from a linear fit of user Accuracy on support rate, where a larger slope indicates stronger bias toward mainstream users. Utility and fairness must be read jointly: utility gains may come from sacrificing minorities, while fairness gains may come from degrading overall utility. Full formulas are in Appendix~\ref{app:metrics}.

\subsection{Main Results}

We compare PAFO with the baselines on both datasets, with results reported in Table~\ref{tab:main_results}. PAFO achieves Pareto-style improvement over both groups simultaneously: Min Acc. rises by +18.3 on Personal-LLM and +43.0 on DSP over BTL, while Maj Acc. is improved on Personal-LLM and essentially preserved on DSP. PAFO is best on every metric on Personal-LLM, and leads on Acc., Min Acc., CV, and Gini on DSP, with AS-Slope virtually tied with Reweight. Notably, AS-Slope drops sharply on both datasets even as Min Acc. rises and Maj Acc. holds, indicating that PAFO breaks the dependency between user Accuracy and preference popularity.

The baselines reveal why this improvement is non-trivial. Personalization alone does not produce it: VPL and GPO degrade Min Acc. on Personal-LLM relative to the standard BTL baseline, and even P-RLHF---PAFO's own personalization backbone---falls notably short of PAFO on minorities (70.33 vs.\ 75.00 on Personal-LLM, 69.67 vs.\ 74.47 on DSP), confirming that user conditioning does not automatically absorb support imbalance. Existing Fairness interventions trade rather than inherently improve: Compared to their personalization backbone (P-RLHF), Reweight on DSP lifts Min Acc. only at a clear cost to Maj Acc.; while Reg secures the highest Maj Acc. on DSP at the price of leaving Min Acc. well below PAFO's. PAFO is the only method that substantially improves minorities while matching or exceeding its own backbone on majorities, validating that Pareto-style improvement is achievable in personalized reward modeling under a single deployable model.

\subsection{In-depth Analyses}
\input{figures/ablation}
\paragraph{Ablation Studies.} Table~\ref{tab:ablation} ablates PAFO's components on top of the Base model, which is trained on all data and inherits the majority bias (effectively serving as PAFO's majority specialist). We have four main results: 1) A minority specialist alone trades majorities for minorities. "Min Only" continues training Base on minority data, lifting Min Acc. dramatically (+23.2 / +18.8) but collapsing Maj Acc. (-16.6 / -37.0)---a textbook Pareto failure\footnote{Notably, PAFO's task is not to match the Min Specialist on minorities, but to deliver as much of its minority benefit as possible without surrendering the majority.}; 2) Non-conditional distillation fails to lift minorities. "Maj$\rightarrow$Min" distills Base into the minority specialist without conditional routing. Maj Acc. recovers, but Min Acc. barely moves (+1.4 / -0.5)---majority preference structure is precisely what minorities do not share; 3) SFT cannot substitute for margin-level distillation. "Min + SFT" replaces margin distillation with continued SFT. Performance is uneven (+4.8 on Personal-LLM, -3.3 on DSP), showing that response-level imitation cannot consolidate group-specific preference structure; 4) In contrast, PAFO improves both groups on both datasets, confirming that its three components---group-specialized modeling, conditional routing, and margin-level distillation---are jointly necessary.

\begin{figure}[h]
    \centering
    \includegraphics[width=0.98\linewidth]{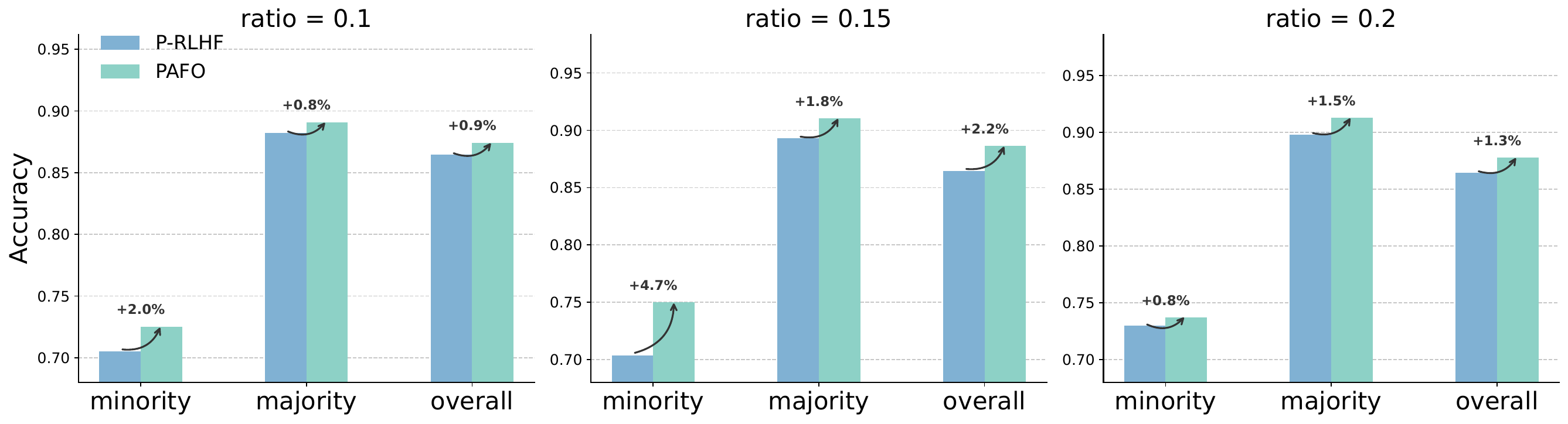}
    \caption{Sensitivity of PAFO to the minority ratio. P-RLHF is shown as a reference.}
    \label{fig:min_ratio}
    \vspace{-20pt}
\end{figure}

\paragraph{Sensitivity to Minority Ratio.} PAFO relies on a training-time partition of users into majority and minority groups to train the specialists used for distillation. We therefore study how sensitive PAFO is to the minority ratio used at this stage. As shown in Figure~\ref{fig:min_ratio}, we vary the ratio over \{0.10, 0.15, 0.20\}. PAFO achieves Pareto improvement over its personalization backbone (P-RLHF) at every setting, with Min Acc., Maj Acc., and Overall Acc. all rising consistently across the three ratios. This confirms that PAFO's advantage stems from the distillation mechanism itself and is robust to the choice of minority threshold.

\begin{wrapfigure}{r}{0.44\linewidth}
    \centering

    \includegraphics[width=\linewidth]{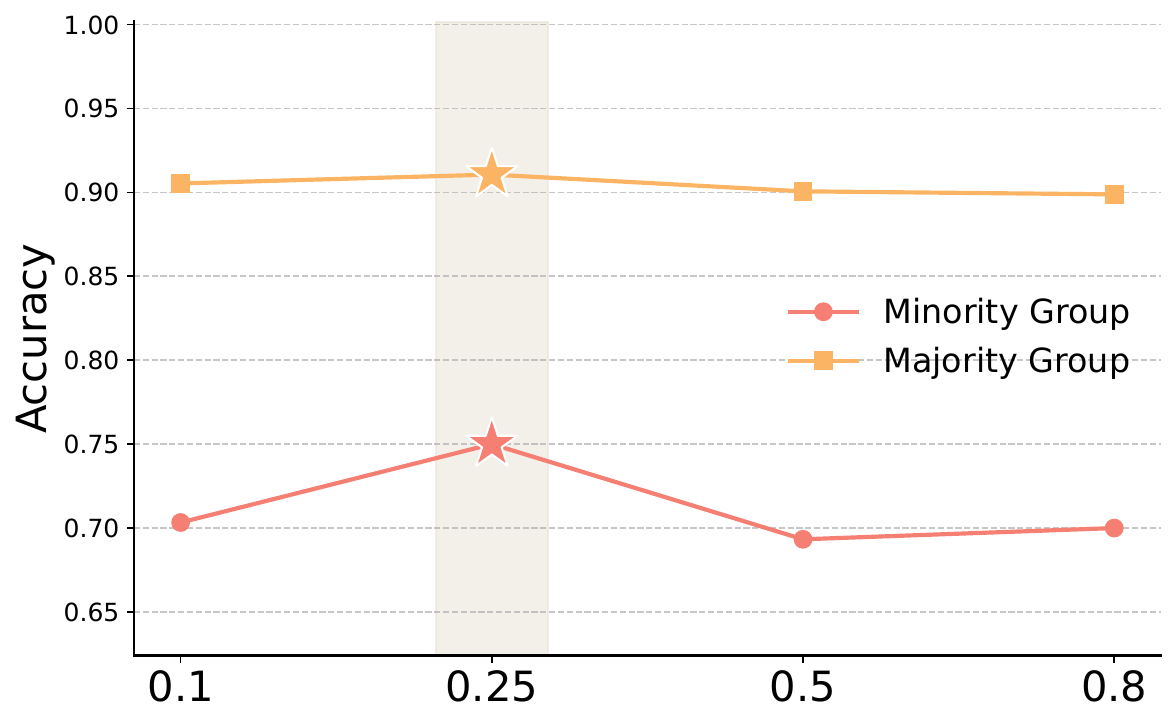}
    \caption{Effect of the distillation weight $\alpha$ ($\alpha \downarrow$, strength $\uparrow$).}
    \label{fig:loss_alpha}
\end{wrapfigure} 

\paragraph{Effects of Distillation Strength.} The coefficient $\alpha$ in Eq.~\ref{final loss} controls the trade-off between the distillation loss and the original BT reward modeling loss. We vary $\alpha \in \{0.1, 0.25, 0.5, 0.8\}$, 
with results shown in Figure~\ref{fig:loss_alpha}. 
The distillation strength has a much larger impact on Min Acc. than on Maj Acc.: Min Acc. increases first, then decreases as $\alpha$ increases, while Maj Acc. stays essentially flat across all settings. This asymmetry is intuitive---the majority is already well-supported by the base model, so it benefits little from additional distillation signal and is also not harmed by it; the minority, by contrast, depends almost entirely on the distillation signal to recover its group-specific preference structure, making it sensitive to how strongly that signal is weighted. 


%% file: figures/main_results.tex
\definecolor{pafoCol}{HTML}{eef2fd}   

\begin{table}[t]
  \caption{Performance comparison on Personal-LLM and DSP. We report utility metrics ($\uparrow$, higher is better) and fairness metrics ($\downarrow$, lower is better). Best results per row are in \textbf{bold}.}
  \label{tab:main_results}
  \centering
  \renewcommand{\arraystretch}{1.15}   
  \setlength{\tabcolsep}{1pt}
  \begin{tabular*}{\linewidth}{@{\extracolsep{\fill}} c c l ccccccc @{}}
    \toprule
    \textbf{Datasets} & \textbf{Group} & \textbf{Metric}
    & \textbf{BTL} & \textbf{VPL} & \textbf{GPO} & \textbf{Reg} & \textbf{Reweight} & \textbf{P-RLHF}
    & \cellcolor{pafoCol}\textbf{PAFO} \\
    \midrule
    \multirow{6}{*}{\begin{tabular}[c]{@{}c@{}}\textbf{Personal-}\\ \textbf{LLM}\end{tabular}}
    & \multirow{3}{*}{\textbf{Utility} $\uparrow$}
    & Acc.                & 83.20 & 81.25 & 80.60 & 87.25 & 87.60 & 86.45 & \cellcolor{pafoCol}\textbf{88.65} \\
    & & Min Acc.          & 56.67 & 50.33 & 34.00 & 71.33 & 73.33 & 70.33 & \cellcolor{pafoCol}\textbf{75.00} \\
    & & Maj Acc.          & 87.88 & 86.71 & 88.82 & 90.06 & 90.12 & 89.29 & \cellcolor{pafoCol}\textbf{91.06} \\
    \cmidrule(lr){2-10}
    & \multirow{3}{*}{\textbf{Fairness} $\downarrow$}
    & CV                 & 0.1924 & 0.2166 & 0.2862 & 0.1430 & 0.1357 & 0.1459 & \cellcolor{pafoCol}\textbf{0.1330} \\
    & & Slope            & 0.7707 & 0.7429 & 0.8009 & 0.4593 & 0.4672 & 0.5175 & \cellcolor{pafoCol}\textbf{0.4193} \\
    & & GINI             & 0.1021 & 0.1172 & 0.1426 & 0.0776 & 0.0743 & 0.0780 & \cellcolor{pafoCol}\textbf{0.0703} \\
    \midrule
    \multirow{6}{*}{\textbf{DSP}}
    & \multirow{3}{*}{\textbf{Utility} $\uparrow$}
    & Acc.                & 71.67 & 70.24 & 71.55 & 77.50 & 75.83 & 77.09 & \cellcolor{pafoCol}\textbf{78.08} \\
    & & Min Acc.          & 31.47 & 30.91 & 33.73 & 69.13 & 72.73 & 69.67 & \cellcolor{pafoCol}\textbf{74.47} \\
    & & Maj Acc.          & 78.76 & 77.18 & 78.22 & \textbf{78.98} & 76.38 & 78.40 & \cellcolor{pafoCol}78.72 \\
    \cmidrule(lr){2-10}
    & \multirow{3}{*}{\textbf{Fairness} $\downarrow$}
    & CV                 & 0.2738 & 0.2776 & 0.2582 & 0.1469 & 0.1373 & 0.1506 & \cellcolor{pafoCol}\textbf{0.1310} \\
    & & Slope            & 0.6780 & 0.6622 & 0.6414 & 0.2472 & \textbf{0.1685} & 0.2357 & \cellcolor{pafoCol}0.1696 \\
    & & GINI             & 0.1420 & 0.1440 & 0.1332 & 0.0796 & 0.0758 & 0.0823 & \cellcolor{pafoCol}\textbf{0.0728} \\
    \bottomrule
  \end{tabular*}
  \vspace{-15pt}
\end{table}

%% file: figures/ablation.tex
\begin{table}[t]
  \caption{Ablation of PAFO. Min Only: minority specialist alone. Maj $\rightarrow$ Min: Maj to Min distillation. Min + SFT: SFT replacing margin distillation. RI: relative improvement (\%) over Base.}
  \label{tab:ablation}
  \centering
  \setlength{\tabcolsep}{3pt} 
  \begin{tabular}{c cccc cccc}
    \toprule
    \multicolumn{1}{c}{\textbf{Datasets} ($\rightarrow$)}
    & \multicolumn{4}{c}{\textbf{Personal LLM}}
    & \multicolumn{4}{c}{\textbf{DSP}}\\
    \cmidrule(lr){2-5}
    \cmidrule(lr){6-9}
    \textbf{Methods} ($\downarrow$)
    & \textbf{Min Acc} & \textbf{RI}
    & \textbf{Maj Acc} & \textbf{RI}
    & \textbf{Min Acc} & \textbf{RI}
    & \textbf{Maj Acc} & \textbf{RI} \\
    \midrule
    \textbf{Base}
    & 70.33 & - & 89.29 & - & 69.67 & - & 78.40 & -\\

    \textbf{Min Only}
    & 86.67 & 23.23\textcolor{mygreen}{$\uparrow$} & 74.47 &  16.60\textcolor{myred}{$\downarrow$}
    & 82.73 & 18.75\textcolor{mygreen}{$\uparrow$} & 49.41 &  36.98\textcolor{myred}{$\downarrow$}\\

    \textbf{Maj$\rightarrow$ Min}
    & 71.33 & 1.42\textcolor{mygreen}{$\uparrow$} & 90.29 &  1.12\textcolor{mygreen}{$\uparrow$}
    & 69.60 & 0.47\textcolor{myred}{$\downarrow$} & 78.85 &  0.57\textcolor{mygreen}{$\uparrow$}\\

    \textbf{Min + SFT}
    & 73.67 & 4.75\textcolor{mygreen}{$\uparrow$} & 89.53 &  0.27\textcolor{mygreen}{$\uparrow$}
    & 67.40 & 3.26\textcolor{myred}{$\downarrow$} & 77.84 &  0.71\textcolor{myred}{$\downarrow$}\\

    \textbf{PAFO}
    & 75.00 & 6.64\textcolor{mygreen}{$\uparrow$} & 91.06 &  1.98\textcolor{mygreen}{$\uparrow$}
    & 74.47 & 6.89\textcolor{mygreen}{$\uparrow$} & 78.72 &  0.41\textcolor{mygreen}{$\uparrow$} \\
    \bottomrule
  \end{tabular}
\end{table}

%% file: sections/5_related_workv1.tex
\section{Related Work}

Our work intersects three lines of research. \textbf{Personalized reward modeling} learns user-dependent preference signals through explicit user representations, latent preference modeling, reward factorization, or low-rank decomposition \citep{zhao2025reinforced, poddar2024personalizing, bose2025lore, wang2026think, shenfeld2025language}, with some approaches operating at decoding or prompting time \citep{chen2024pad}. These methods primarily target average personalization quality and rarely audit utility distribution across users \citep{chakraborty2024maxmin}. \textbf{Fairness in preference learning} examines disparities induced by heterogeneous feedback. PRISM documents substantial individual- and group-level variation in alignment data \citep{kirk2024prism}; MaxMin-RLHF and Group Robust Preference Optimization improve worst-group utility via max-min or distributionally-robust objectives \citep{chakraborty2024maxmin, ramesh2024group}; \textcolor{blue}{P-GRPO enhances policy optimization for minority groups through group-wise advantage normalization in the GRPO learning process~\citep{P-GRPO}}; theoretical analyses caution against naive preference aggregation \citep{park2024rlhf, shirali2025direct} and recent works indicate that unfairness can originate at the reward-modeling stage \citep{ouyang2025towards, song2025towards}. 
\textcolor{blue}{Efforts have been made to fair reward modeling; however, they mainly address demographic fairness and bias mitigation~\cite{PAFO,ouyang2025towards}. In contrast, our work focuses on support-structured personalization unfairness, a distinct challenge that has received limited attention. Moreover, to our knowledge, existing approaches have not considered distillation-based methods for addressing this issue.}
\textbf{Multi-objective alignment}, including MODPO, DPA, RiC, MetaAligner, Panacea, ArmoRM, HaM, and PARM, optimizes over multiple preference dimensions or seeks Pareto-style trade-offs, but typically on \emph{pre-specified} objective axes rather than user-induced groups \citep{zhou2023beyond, wang2024arithmetic, yang2024rewards, yang2024metaaligner, zhong2024panacea, wang2024interpretable, mukherjee2024multi, lin2025parm}. PAFO differs from all three. Unlike personalized RM work, it explicitly targets utility \emph{fairness} across users. Unlike fairness-in-preference-learning work, it pursues \emph{Pareto improvement}, lifting minority utility without degrading majority utility, rather than max-min reallocation. Unlike multi-objective alignment, its objectives are induced by preference-support imbalance and require no group label at inference. A more comprehensive review of related work is provided in Appendix~\ref{related_work}.

%% file: sections/6_conclusionv1.tex
\section{Conclusion}

We studied personalized reward modeling through the lens of \emph{preference-support imbalance} and identified \emph{personalization unfairness}: a systematic disparity in reward modeling quality governed by how well a user's preference pattern is represented in training. We argued that mitigating this disparity should be framed as Pareto improvement over preference groups, not as a fairness--utility trade-off, and instantiated this view in PAFO, which trains group-specialized reward models and consolidates them into a single deployable model via conditional margin-level distillation. PAFO is the only method, among six baselines on two benchmarks, that simultaneously lifts minority-group accuracy and preserves majority-group accuracy while reducing user-level unfairness on three metrics; theoretically, its update direction approximates the implicit Pareto-optimal direction up to a covariance term. More broadly, our findings suggest that aggregate accuracy alone is an insufficient measure of personalization quality, and that user-level fairness deserves to be treated as a first-class objective in personalized alignment.

%% file: sections/7_appendix.tex
\section{Case Study}
To more intuitively illustrate the effect of PAFO, we further provide a case study on the scoring margins of a specific preference pair, as shown below. We select a test example from a minority-group user in the DSP dataset and present the scoring margins assigned by PAFO and the base model (P-RLHF) to the positive and negative candidate responses, respectively.

It can be clearly observed that the base model fails to recognize that the positive response, which exhibits a poetic style, is exactly the type of answer preferred by a user in the literature category, and instead assigns a relatively high score to the negative response, which is more restrained and less engaging in style. In contrast, PAFO is able to capture the user’s personalized preference more accurately, thereby making a distinction between the positive and negative responses that is better aligned with the user’s preference. This demonstrates its stronger capability in personalized reward modeling.

\begin{casestudybox}[Case Study]

\casefield{Prompt}
Name three things a person needs for a successful job hunt.

\casefield{User Preference}
literature

\vspace{0.5em}
\noindent\rule{\linewidth}{0.3pt}

\modelchoice{Base preferred response with margin = -3.7656}

To begin the search, you need to prepare yourself well, dress like a star,

Polish your resume and skills, polish and brush up, you need to stand out and go far.

You will need some patience too, as the journey may be long,

But persistence and determination will help you stay strong.

Networking is a key factor that should not be ignored,

Connections can open doors, that you may have ignored.

So in summary, preparation, patience, and networking can be,

The essential tools that you need for a successful job hunt spree.

\vspace{0.4em}
\noindent\rule{\linewidth}{0.3pt}

\casefield{PAFO preferred response with margin = 0.6562}

As an experienced researcher, I would say that three things a person needs for a successful job hunt are:

1. A clear understanding of their career goals and aspirations: Having clarity about the type of job, industry, company, and role they want to pursue will help individuals target their job search efforts and tailor their applications accordingly.

2. A professional and well-crafted resume/CV and cover letter: A well-prepared resume/CV and cover letter perfectly tailored to the job of interest can significantly increase the chances of getting shortlisted for an interview. Hence, it is important to invest time in crafting these documents.

3. Networking: While submitting online applications is an essential component of a job search, networking with industry professionals and building relationships with potential employers (through events, social media, etc.) can help individuals learn about job openings that may not be publicly advertised, and also get insider information about the company culture and job requirements.

\end{casestudybox}

\section{Prompts}
We use the following prompt template to concatenate the query x and the response y, and feed the resulting sequence to the model as the complete input.
\begin{promptbox}[Prompt Template]
"role": "user"

"content": "The following context contains preference demonstrations provided by the same user. Each example shows responses that this user considers acceptable or preferred for a given question.

Question:\{query\}

Liked Response:\{chosen response\}

Question:\{query\}

Liked Response:\{chosen response\}

Use the contexts above to generate a good response for the user prompt below. 

Now the question is:

\{query\}"

"role": "assistant"

"content": "\{response\}"
\end{promptbox}

\section{Experiment Details}
\label{details}

\subsection{Datasets Details}\label{appendix:data-details}

\textbf{Personal-LLM}. This is a controllable benchmark for simulating heterogeneous user preferences via reward model interpolation. 
This allows us to construct personalized preference pair data with quantifiable preference bias on top of this dataset.
Specifically, each prompt in the dataset is associated with eight candidate responses generated by advanced large language models such as GPT-4o. 
In addition, each response is scored by ten reward models that perform strongly on RewardBench. 
Therefore, given a prompt x and a candidate response y, we obtain a 10-dimensional reward vector $\mathbf{R}(x,y)\in \mathbb{R}^{10}$.

To simulate diverse user preferences, we sample a 10-dimensional user vector $\mathbf{u}$ from a Dirichlet distribution. 
This vector represents the user-specific preference weights over the ten reward models.
We then compute $\mathbf{u}^T \mathbf{R}(x,y)$ as the personalized score of the user for each prompt-response pair. 
For each prompt, the highest-scoring response among the eight candidates is selected as the positive sample, while the lowest is selected as the negative sample, thereby forming a personalized preference pair for that user. 

Finally, we simulate 100 users, with 20 samples for each user. 
To reduce randomness in user-level evaluation, we set the test set to have the same size as the training set, i.e., each user has 20 training samples and 20 test samples. 
To construct a dataset with stronger preference bias, we set the concentration parameter of the Dirichlet distribution used to simulate user vectors to \(\alpha=0.001\).

\textbf{DSP}. This is a domain-specific preference dataset containing multiple stylistic response preferences. 
Specifically, for each prompt, there are four stylistically different responses generated by GPT models, together with one original response without any role-play setting. 
We assume that different users correspond to one of five preference types. 
For each prompt, the response matching the user’s preference type is treated as the positive sample, while one response is randomly selected from the remaining four as the negative sample. Specifically, we simulate 500 users, and each user is also assigned 20 samples.



\subsection{Computation of User Support Rate}\label{app:support_rate}

\paragraph{Support Rate in Personal-LLM.}
For each simulated user \(u\), we first construct personalized preference pairs according to the user vector. 
Let \(\mathcal{D}_u=\{(x_i, y_{i,u}^{+}, y_{i,u}^{-})\}_{i=1}^{N_u}\) denote the preference dataset of user \(u\), where \(y_{i,u}^{+}\) and \(y_{i,u}^{-}\) are respectively the highest-scoring and lowest-scoring responses under user \(u\)'s personalized reward function. 
Given the user vector \(\mathbf{u}\) and the reward vector \(\mathbf{R}(x,y)\), the personalized score of user \(u\) for a prompt-response pair \((x,y)\) is defined as
\[
s_u(x,y) = \mathbf{u}^{\top}\mathbf{R}(x,y).
\]

To measure how common user \(u\)'s preferences are among the overall user population, we evaluate whether other users agree with user \(u\)'s preference pairs. 
For another user \(v \neq u\), we define the agreement indicator on the \(i\)-th preference pair of user \(u\) as
\[
A_{v}^{(u,i)}
=
\mathbb{I}
\left[
s_v(x_i, y_{i,u}^{+}) > s_v(x_i, y_{i,u}^{-})
\right],
\]
where \(\mathbb{I}[\cdot]\) is the indicator function. 
This indicator equals 1 if user \(v\) assigns a higher score to user \(u\)'s positive response than to the negative response, and 0 otherwise.

The support rate of user \(u\) is then computed as the average agreement from all other users over all preference pairs:
\[
\mathrm{Supp}(u)
=
\frac{1}{N_u(|\mathcal{U}|-1)}
\sum_{i=1}^{N_u}
\sum_{v \in \mathcal{U}, v \neq u}
A_{v}^{(u,i)},
\]
where \(\mathcal{U}\) denotes the set of all simulated users. 
A lower support rate indicates that the preferences of user \(u\) are less aligned with the majority of users and are therefore more likely to represent rare or outlier preferences. 
In our experiments, users with the lowest 15\% support rates are categorized as the minority group, while the remaining users are categorized as the majority group.

\paragraph{Support Rate in DSP.}
In DSP, each user is assigned one of several predefined preference types. 
Since users with the same preference type share the same preferred response style, the support rate of a user is directly defined by the population proportion of the corresponding preference type. 
Formally, let \(c(u)\) denote the preference type of user \(u\), and let \(\mathcal{U}_{c(u)}\) denote the set of users whose preference type is \(c(u)\). 
The support rate of user \(u\) is computed as
\[
\mathrm{Supp}(u)
=
\frac{|\mathcal{U}_{c(u)}|}{|\mathcal{U}|}.
\]
Thus, users belonging to less frequent preference types have lower support rates. 
In our experiments, three preference types (entertainment, literature, normal) are each assigned a proportion of 5\% and are treated as minority groups, while the remaining two preference types (business, academy) are assigned proportions of 25\% and 60\%, respectively, and are treated as majority groups.

\subsection{Dataset Bias Settings}\label{appendix:databias} 
To construct group-biased data distributions, we set the concentration parameter of the Dirichlet distribution used to simulate user vectors to \(\alpha=0.001\). 
We then use user support rates to partition users into majority and minority groups.
In Personal-LLM, the 15\% of users with the lowest support rates are defined as the minority group, while the remaining users are treated as the majority group.

In DSP, we construct biased data distributions by controlling the proportions of different preference types. 
Specifically, three preference types are each assigned a user proportion of 5\% and are treated as minority groups, while the remaining two preference types are assigned proportions of 25\% and 60\%, respectively, and are treated as majority groups. 
In this dataset, the user support rate directly corresponds to the population proportion of the associated preference type. More dataset details are provided in Appendix~\ref{app:support_rate}.

\subsection{Implementation Details} \label{app:baseline_details}
\paragraph{Baseline Details.}
For the Reweighting baseline, we assign a larger training weight to minority-group samples in the original personalized reward modeling objective. Formally, the training objective is defined as
\[
\mathcal{L}_{\mathrm{reweight}}
=
\mathbb{E}_{(x,p,y^+,y^-)\sim \mathcal{D}}
\left[
w_g \cdot \mathcal{L}_{\mathrm{BT}}(x,p,y^+,y^-)
\right],
\]
where \(w_g\) denotes the group-specific sample weight. For majority-group samples, we set \(w_g=1\), while for minority-group samples, we search \(w_g\) over \(\{1.1, 1.2, 1.3\}\).

For the Regularization baseline, we add an additional regularization term to the original personalized reward modeling objective to reduce performance disparities across users. The objective is defined as
\[
\mathcal{L}_{\mathrm{reg}}
=
\mathcal{L}_{\mathrm{BT}}
+
\lambda_{\mathrm{reg}}\mathcal{L}_{\mathrm{fair}},
\]
where \(\mathcal{L}_{\mathrm{fair}}\) denotes the regularization term over user reward distributions, and \(\lambda_{\mathrm{reg}}\) controls the strength of the regularization. We search \(\lambda_{\mathrm{reg}}\) over \(\{0.01, 0.001, 0.0001\}\).

Since Pareto fairness requires improving disadvantaged users without substantially degrading advantaged users, we select hyperparameters for Reweighting and Regularization according to the majority-group performance under comparable fairness improvement. Specifically, among the searched hyperparameters, we report the result that achieves the best majority-group accuracy, or equivalently the smallest majority-group degradation, while still improving fairness-related metrics.

\paragraph{LLM Backbone and Hyper-parameters.}
We use the open-source Qwen3-4B\citep{yang2025qwen3} as the backbone model.
During training, we adopt low-rank adaptation (LoRA)\citep{hu2022lora} to train our models, with LoRA alpha of 16, LoRA rank of 8, and LoRA dropout of 0.1.
We use a learning rate of \(5\times 10^{-4}\) for all the models.
On the Personal-LLM dataset, the train epochs is set to 2, while on the DSP dataset, the train epochs is set to 1.
For our framework, the parameter $\alpha$, which controls the trade-off between the distillation loss and the original reward modeling loss, is tuned in the range \(\{0.1, 0.25, 0.5, 0.8\}\).
All experiments are conducted on NVIDIA A100 GPUs with 80GB GPU memory.

\subsection{Computation of Fairness Metrics} \label{app:metrics}
The utility metrics, including \textit{Overall Accuracy}, \textit{Minority Accuracy} (Min Acc), and \textit{Majority Accuracy} (Maj Acc), are straightforward. Here, we mainly present the formulations of the fairness metrics.

\paragraph{Coefficient of Variation.}
We compute the coefficient of variation over user-level accuracies to measure the relative dispersion of model performance across users. Let \(a_u\) denote the accuracy of user \(u\), and let \(\mathcal{U}\) denote the full user set. The mean and standard deviation of user-level accuracies are defined as
\[
\mu_a = \frac{1}{|\mathcal{U}|}\sum_{u\in\mathcal{U}} a_u,
\]
\[
\sigma_a =
\sqrt{
\frac{1}{|\mathcal{U}|}
\sum_{u\in\mathcal{U}}
(a_u-\mu_a)^2
}.
\]
The coefficient of variation is then computed as
\[
\mathrm{CV} = \frac{\sigma_a}{\mu_a}.
\]
A smaller CV indicates more stable performance across users.

\paragraph{Gini Coefficient.}
We use the Gini coefficient to measure inequality in user-level accuracies. Given user-level accuracies \(\{a_u\}_{u\in\mathcal{U}}\), the Gini coefficient is computed as
\[
\mathrm{Gini}
=
\frac{
\sum_{u\in\mathcal{U}}
\sum_{v\in\mathcal{U}}
|a_u-a_v|
}{
2|\mathcal{U}|^2 \mu_a
}.
\]
A smaller Gini coefficient indicates lower inequality in user-level performance.

\paragraph{Accuracy-Support Rate Slope.}
To measure whether model performance depends on how common a user's preference is, we fit a linear regression between user support rate and user-level accuracy:
\[
a_u = \beta_0 + \beta_1 \mathrm{Supp}(u) + \epsilon_u,
\]
where \(\mathrm{Supp}(u)\) denotes the support rate of user \(u\). The slope \(\beta_1\) is estimated by ordinary least squares:
\[
\beta_1
=
\frac{
\sum_{u\in\mathcal{U}}
(\mathrm{Supp}(u)-\overline{\mathrm{Supp}})
(a_u-\mu_a)
}{
\sum_{u\in\mathcal{U}}
(\mathrm{Supp}(u)-\overline{\mathrm{Supp}})^2
},
\]
where
\[
\overline{\mathrm{Supp}}
=
\frac{1}{|\mathcal{U}|}
\sum_{u\in\mathcal{U}}
\mathrm{Supp}(u).
\]
A smaller slope indicates that model performance is less dependent on preference support rate, suggesting better user-level fairness.

\section{Limitations}
\label{limit}
This work still has several limitations. 
For example, our evaluation mainly relies on automatic metrics, and we do not further conduct a human audit of the unfairness produced by personalized reward models. 
Due to limited annotation resources and practical experimental constraints, we are not yet able to systematically analyze the behavioral differences of the model across different user groups through human evaluation. 
Although automatic metrics provide a scalable, controllable, and reproducible evaluation protocol, human audit may still offer more fine-grained evidence for understanding how model unfairness is perceived and manifested in real user-facing scenarios.
In addition, this work assumes that user preferences remain relatively stable during training and evaluation. 
However, in real-world personalization scenarios, user preferences may change over time, task contexts, or interaction stages, and the same user may exhibit different preference patterns at different stages. 
The current framework does not explicitly model such dynamic preference changes. 
Future work can further study Pareto-aligned personalized reward modeling under dynamic user preferences, so that user representations and fairness optimization objectives can be updated as preferences evolve.

\section{Proof of the Pareto-Aligned Direction}
\label{app:proof}

\begin{proof}[Proof of Theorem 1:]
We define the margin discrepancy as $\omega_g(i) = \sigma(m_g(i)) - \sigma(m_S(i))$, the total negative gradient of the distillation loss is
\begin{equation}
    v_t = \sum_{g=1}^n \mathbb{E}_{i \sim \mathcal{D}_g} \left[ \omega_g(i) \nabla_{\theta_S} m_S(i) \right].
\end{equation}
Let $\nabla \ell_S(i)=(1 - \sigma(m_S(i))) \nabla_{\theta_S} m_S(i)$, the true sample-level utility gradient on group $g$ is $\nabla U_{\theta_S}(g) = \mathbb{E}_{i \sim \mathcal{D}_g} [\nabla \log \sigma(m_S(i))]= \mathbb{E}_{i \sim \mathcal{D}_g}[\nabla \ell_S(i)]$. We can factorize the residual as $\omega_g(i) = \alpha_g(i) (1 - \sigma(m_S(i)))$, where $\alpha_g(i) = \frac{\sigma(m_g(i)) - \sigma(m_S(i))}{1 - \sigma(m_S(i))}$.
By applying the exact statistical identity for the expectation of a product, we decompose the gradient:
\begin{align}
    \mathbb{E}_{i \sim \mathcal{D}_g} \left[\omega_g(i) \nabla_{\theta_S} m_S(i)  \right] &= \mathbb{E}_{\mathcal{D}_g} [\alpha_g(i)] \mathbb{E}_{\mathcal{D}_g} [\nabla \ell_S(i)] + \text{Cov}_{\mathcal{D}_g}(\alpha_g(i), \nabla \ell_S(i)) \\
    &= \lambda_g \nabla U_{\theta_S}(g) + e_g
\end{align}
where $\lambda_g = \mathbb{E}_{\mathcal{D}_g}[\alpha_g(i)]$ is the global expected multiplier for group $g$. In addition, we have $\sigma(m_g(i)) \ge \sigma(m_S(i))$, hence $\alpha_g(i) \ge 0$, which guarantees $\lambda_g \ge 0$. 
Summing over all groups yields $v_t = \sum_{g=1}^n \lambda_g \nabla U_{\theta_S}(g) + \mathcal{E}_{cov}$, where $\mathcal{E}_{cov} = \sum_{g=1}^n e_g$.    
\end{proof}

\section{Representation Clustering Analysis}
\begin{figure}[htbp]
    \centering

    \begin{minipage}[t]{0.4\linewidth}
        \centering
        \includegraphics[width=\linewidth]{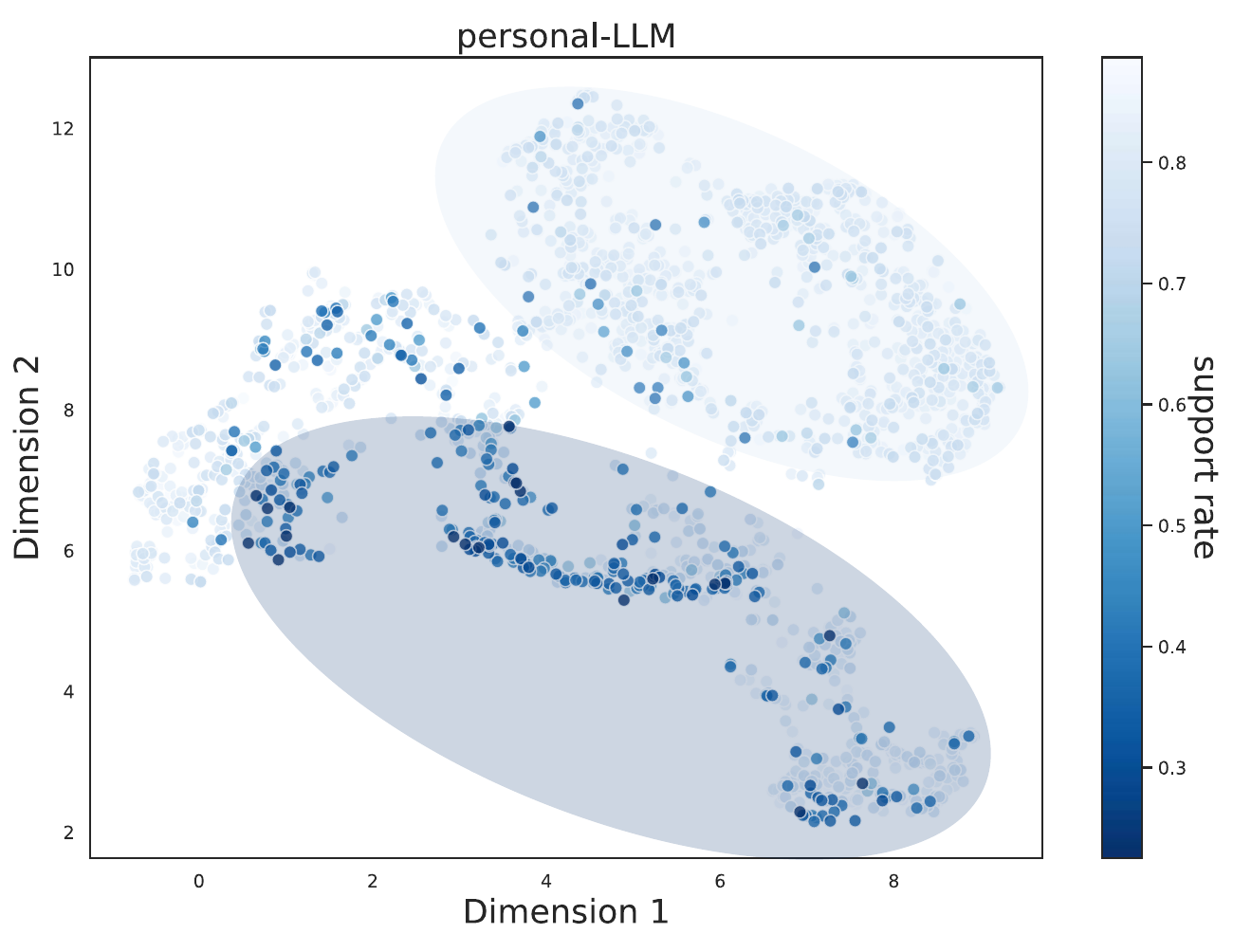}
        \caption{Visualization of hidden state on personal-LLM}
        \label{fig:cluster_personal}
    \end{minipage}
    \begin{minipage}[t]{0.4\linewidth}
        \centering
        \includegraphics[width=\linewidth]{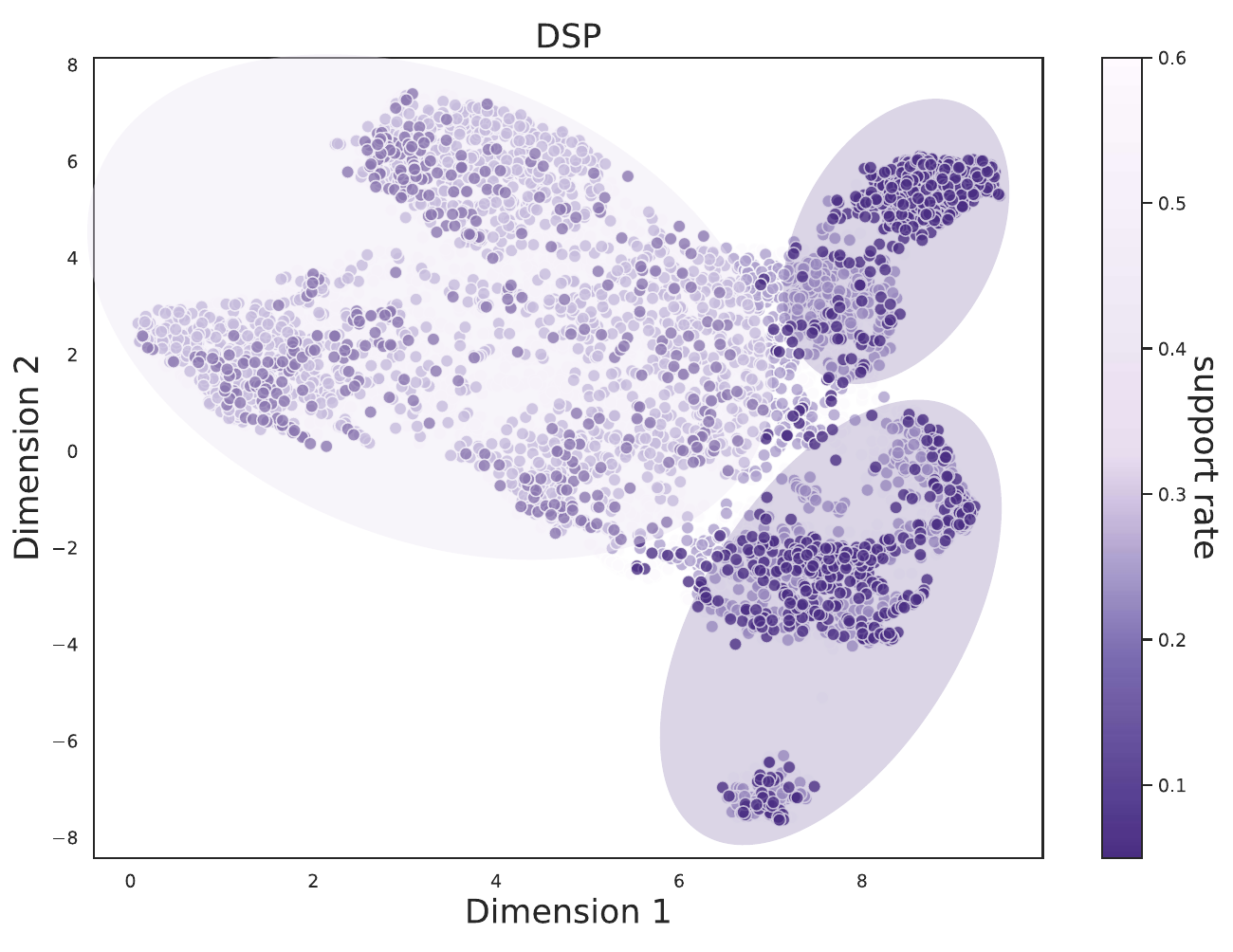}
        \caption{Visualization of hidden state on DSP}
        \label{fig:cluster_dsp}
    \end{minipage}

\end{figure}
To provide a qualitative view of preference heterogeneity, we perform a representation clustering analysis on the training data from both datasets. 
Specifically, we feed all training samples into the Qwen3-4B backbone model and extract their final-layer hidden representations. 
We then apply UMAP to project these representations into a two-dimensional space, as shown in Figure~\ref{fig:cluster_personal} and Figure~\ref{fig:cluster_dsp}.

The visualization shows that samples associated with different support rates exhibit non-uniform representation patterns. 
In particular, samples from low-support users are more likely to appear in separated regions or outlying clusters. 
Although this analysis is only qualitative, it suggests that low-support users may correspond to distinct preference patterns at the representation level. 
This observation supports our motivation for introducing group-specialized modeling and conditional distillation in PAFO, which aim to better capture preference structures that may be underrepresented in the mixed training distribution.

\begin{figure}[t]
    \centering
    \includegraphics[width=0.55\linewidth]{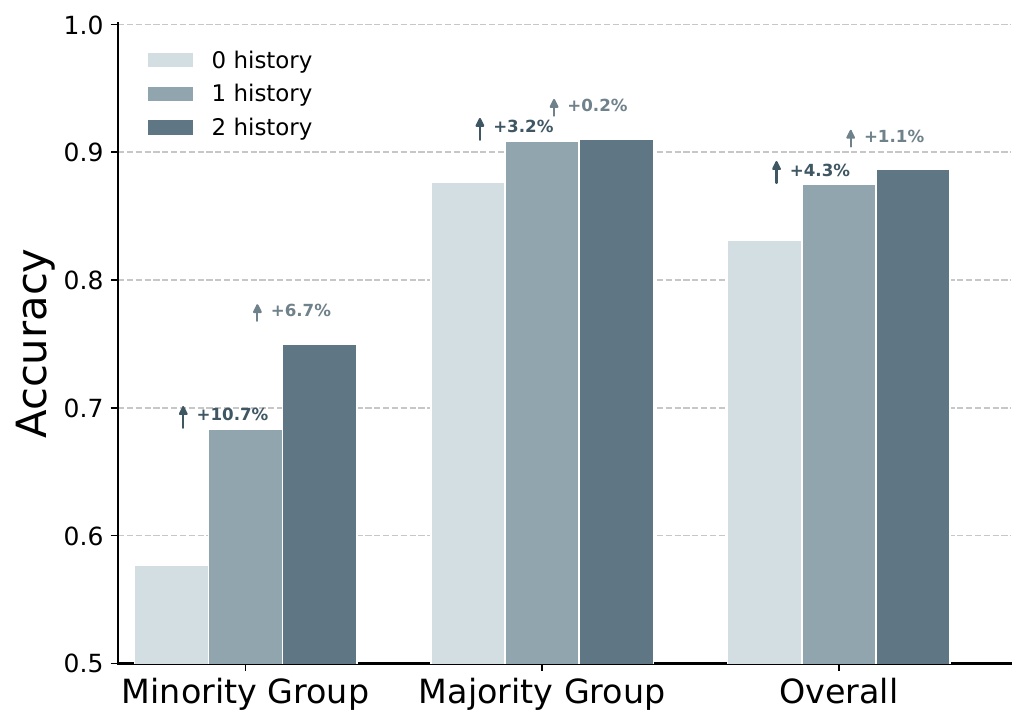}
    \caption{PAFO's performance with different numbers of historical examples on inputs.}
    \label{fig:his_num}
\end{figure}


\section{The Performance with Different History Length}
To verify the role of historical interaction data in personalized reward modeling, we vary the number of historical examples included in the input and observe the resulting changes in model performance, as shown in Figure~\ref{fig:his_num}. 
The results show that as the number of historical examples increases, the overall performance of PAFO continues to improve, confirming the importance of historical behavior information for user preference modeling.
Further analysis shows that the performance gain from increasing the number of historical examples from 0 to 1 is significantly larger than the gain from increasing it from 1 to 2. 
These results show that user history provides an effective personalization signal, and PAFO can exploit even limited historical information to better model heterogeneous preferences.

\section{Additional Related Work}\label{related_work}
\paragraph{Personalized Reward Model.}
Personalized reward modeling aims to learn user-dependent preference signals from heterogeneous user feedback, rather than fitting a single reward function to all users \citep{zhao2025reinforced,poddar2024personalizing}. Existing studies incorporate user information into the RLHF pipeline through explicit user representations or latent preference modeling, enabling individualized alignment  \citep{zhao2026nextquill, poddar2024personalizing}. Other methods further exploit shared structures across users to improve scalability and few-shot adaptation, such as personalized reward spaces, reward factorization, and low-rank reward modeling  \citep{wang2026think,bose2025lore,shenfeld2025language}. In addition, some approaches perform personalized reward estimation at the inference or prompting stage through decoding-time reward guidance or persona-guided prompting \citep{chen2024pad,ryan2025synthesizeme}. These studies demonstrate the importance of modeling preference heterogeneity for LLM personalization \citep{qiu2025measuring}. However, they primarily focus on average personalization quality or adaptation efficiency \citep{chakraborty2024maxmin}, while paying limited attention to whether utility is fairly distributed across different users or user groups.

\paragraph{Fairness in Preference Learning.}
When user preferences are highly heterogeneous, preference learning based on aggregate objectives may favor mainstream preferences, leading to utility disparities across user groups \citep{chakraborty2024maxmin, ramesh2024group, park2024rlhf, shirali2025direct,zhang2025fair,shi2024fair}. Datasets such as PRISM reveal substantial individual- and group-level variation in alignment feedback, highlighting the importance of whose preferences are represented during alignment \citep{kirk2024prism}. From an optimization perspective, MaxMin-RLHF and Group Robust Preference Optimization improve the performance of under-served or worst-performing groups through max-min or robust objectives \citep{chakraborty2024maxmin, ramesh2024group}. Other work studies heterogeneous feedback from a theoretical perspective, discussing the relationship between personalization and preference aggregation and showing that naively averaging user preferences can be problematic  \citep{park2024rlhf, shirali2025direct}. More directly related to reward learning, reward fairness regularization and reward-model fairness benchmarks further suggest that unfairness may already arise during reward modeling, before being amplified by subsequent policy optimization \citep{ouyang2025towards, song2025towards}. In contrast, we introduce Pareto fairness into personalized reward modeling, with the goal of improving the utility of minority or under-served preference patterns without degrading the performance of other user groups.

\paragraph{Multi-Objective Alignment.}
Multi-objective alignment provides relevant methodological background for modeling complex and potentially conflicting human preferences \citep{zhou2023beyond, wang2024arithmetic, zhong2024panacea, wang2024interpretable}. Prior work formulates LLM alignment as optimization over multiple preference dimensions \citep{zhou2023beyond, wang2024arithmetic}. For example, MODPO, DPA, RiC, MetaAligner, and Panacea study multi-objective preference modeling from the perspectives of multi-objective DPO, reward-space control, dynamic preference adjustment, and Pareto-style adaptation \citep{zhou2023beyond, wang2024arithmetic, yang2024rewards, yang2024metaaligner, zhong2024panacea}. On the reward-modeling side, methods such as ArmoRM, HaM, and PARM further investigate multi-dimensional reward decomposition, hypervolume maximization, and test-time preference-aware reward guidance \citep{wang2024interpretable, mukherjee2024multi, lin2025parm}. These works are conceptually related to ours, but most of them focus on trade-offs among predefined objective dimensions. Our work instead applies the Pareto perspective to group-level utility fairness in personalized reward modeling, aiming to improve reward model fairness under heterogeneous user preferences without requiring explicit group labels at inference time.

\section{Broader Impacts.}
\label{broader impacts}
This work aims to improve the fairness and reliability of personalized reward modeling for large language models. Its potential positive societal impact lies in reducing the performance gap between users with common preference patterns and users whose preferences are less represented in the training distribution. By encouraging Pareto-style improvement, PAFO seeks to improve the utility of under-served preference groups without degrading the performance of other groups, which may contribute to more inclusive and equitable personalized alignment systems. At the same time, this work also has potential negative societal implications. If the training data contain biased, noisy, or harmful preference signals, a personalized reward model may learn and preserve such biases more effectively. In addition, inaccurate group construction during training may lead to misleading fairness conclusions or insufficient protection for truly under-served users. More broadly, improved personalization techniques could be misused to optimize model behavior toward manipulative, polarizing, or otherwise harmful user-specific objectives. These risks suggest that personalized reward models should be deployed with careful data curation, privacy protection, auditing of group-level performance, and monitoring for unintended or malicious uses.

\clearpage